\documentclass{article}

% if you need to pass options to natbib, use, e.g.:
%     \PassOptionsToPackage{numbers, compress}{natbib}
% before loading neurips_2023

% ready for submission
\usepackage[nonatbib,final]{neurips_2023}

% to compile a preprint version, e.g., for submission to arXiv, add add the
% [preprint] option:
%     \usepackage[preprint]{neurips_2023}

% to compile a camera-ready version, add the [final] option, e.g.:
     %\usepackage[final]{neurips_2023}

% to avoid loading the natbib package, add option nonatbib:
%\usepackage[nonatbib]{neurips_2023}

\usepackage[numbers]{natbib}
\usepackage[utf8]{inputenc} % allow utf-8 input
\usepackage[T1]{fontenc}    % use 8-bit T1 fonts
\usepackage{hyperref}       % hyperlinks
\usepackage{url}            % simple URL typesetting
\usepackage{booktabs}       % professional-quality tables
\usepackage{amsfonts}       % blackboard math symbols
\usepackage{nicefrac}       % compact symbols for 1/2, etc.
\usepackage{microtype}      % microtypography
\usepackage{xcolor}         % colors

\usepackage{amssymb}
\usepackage{mathtools}
\usepackage{latexsym}
\usepackage{longtable}
\usepackage{makecell}
\usepackage{multirow}
\usepackage{tikz}
\usepackage{times}
\usepackage{soul}
\usepackage[small]{caption}
\usepackage{graphicx}
\usepackage{amsmath}
\usepackage{amsthm}
\usepackage{algorithm}
\usepackage{algorithmic}
\usepackage{bm}
\usepackage{comment}
\usepackage{stfloats}
\usepackage{subfig}
\newtheorem{thm}{Theorem}[section]

\newtheorem{lem}[thm]{Lemma}

\theoremstyle{definition}
\newtheorem{defn}[thm]{Definition}
\newtheorem{assu}[thm]{Assumption}
\theoremstyle{remark}

\usepackage{slashbox}

\usepackage[textsize=tiny]{todonotes}

\title{Understanding and Improving Ensemble \\ Adversarial Defense}

% The \author macro works with any number of authors. There are two commands
% used to separate the names and addresses of multiple authors: \And and \AND.
%
% Using \And between authors leaves it to LaTeX to determine where to break the
% lines. Using \AND forces a line break at that point. So, if LaTeX puts 3 of 4
% authors names on the first line, and the last on the second line, try using
% \AND instead of \And before the third author name.

\author{
  Yian Deng%\thanks{Use footnote for providing further information
    %about author (webpage, alternative address)---\emph{not} for acknowledging
    %funding agencies.} \\
\\
  Department of Computer Science\\
  The University of Manchester\\
   Manchester, UK, M13 9PL \\
  \texttt{yian.deng@manchester.ac.uk} \\
   \And
   Tingting Mu \\
   Department of Computer Science\\
   The University of Manchester\\
   Manchester, UK, M13 9PL \\
   \texttt{tingting.mu@manchester.ac.uk} \\
}

\begin{document}

\maketitle

\begin{abstract}	

The strategy of ensemble  has become popular in adversarial defense, which trains multiple base classifiers to defend against adversarial attacks in a cooperative manner.  
Despite the empirical success, theoretical explanations on why an ensemble of adversarially trained classifiers is more robust than single ones remain unclear.
To fill in this gap, we  develop a new error theory dedicated to understanding ensemble adversarial defense,  demonstrating a provable 0-1 loss reduction on challenging sample sets  in adversarial defense scenarios.
Guided by this theory, we propose an effective approach to  improve ensemble adversarial defense, named interactive global adversarial training (iGAT). 
The proposal includes (1) a probabilistic distributing rule that selectively allocates to different base  classifiers  adversarial examples that are globally challenging to the ensemble, and (2) a regularization term to rescue the severest weaknesses of the base classifiers.  
Being tested over various existing ensemble adversarial defense techniques,    iGAT is capable of boosting their performance by up to $17\%$  evaluated using  CIFAR10 and CIFAR100 datasets under both   white-box and black-box attacks.
\end{abstract}

\section{Introduction}

Many contemporary machine learning models, particularly the end-to-end ones based on deep neural networks, admit vulnerabilities to small perturbations in the input feature space. 
For instance, in computer vision applications, a  minor change of image pixels computed by an algorithm can manipulate the classification results to produce undesired predictions, but these pixel changes can be imperceptible to human eyes.  
Such malicious perturbations are referred to as adversarial attacks. 
These can result in severe incidents, e.g., medical misdiagnose caused by unauthorized perturbations in medical imaging~\cite{wang2019security},  and  wrong actions taken by autonomous vehicles caused by crafted traffic images~\cite{ren2019security}.

The capability of a machine learning model to defend  adversarial attacks  is referred to as adversarial robustness. 
A formal way to  quantify such robustness is through an adversarial risk,  which can be intuitively understood as the expectation of the worst-scenario loss computed within a local neighborhood region around a naturally sampled data example~\cite{uesato2018adversarial}. 
It is formulated as below:
\begin{equation}
	R_{adv}(\mathbf{f})=\mathbb{E}_{(\mathbf{x}, y)\sim \mathcal{D}}\left[\max_{\mathbf{z}\in\mathcal{B}(\mathbf{x})} \ell\left( \hat{y}\left(\mathbf{f}(\mathbf{z})\right), y\right)\right],
\end{equation}
where $\mathcal{B}(\mathbf{x})$ denotes a  local neighborhood region around the example $\mathbf{x}$ sampled from a natural data distribution $ \mathcal{D}$. The neighbourhood is usually defined based on a selected norm, e.g., a region containing any $\mathbf{z}$ satisfying $  \left\|\mathbf{z}-\mathbf{x}\right\|\le \varepsilon$ for a constant $\varepsilon>0$.   
In general, $\ell(\hat{y}, y)$ can be any loss function quantifying the difference between the predicted output $\hat{y}$ and the  ground-truth output $y$.
A learning process that minimizes the adversarial risk is referred to as  adversarially robust learning.  
In the classification context, a way to empirically approximate the adversarial robustness is through  a classification error computed using a set of adversarial examples generated by applying  adversarial attacks  to the model~\cite{ijcai2021p591}. 
Here, an adversarial attack refers to an algorithm that usually builds on an optimization strategy, and it produces examples perturbed  by a certain strength  so that a machine learning model returns the most erroneous output for these examples~\cite{chakraborty2021survey,lowd2005adversarial}.   

On the  defense  side, there is a rich amount of techniques  proposed to improve the model robustness~\cite{chakraborty2021survey,akhtar2021advances}. 
However,  a robust neural network enhanced by a   defense technique would mostly result in a reduced accuracy for classifying the natural examples~\cite{pmlrv97zhang19p}. 
A  relation between the adversarial robustness and the standard accuracy has  been formally proved by~\citet{TsiprasSETM19Robustness}:  
\textit{``Any classifier that attains at least $1-\delta$ standard accuracy on a dataset $\mathcal{D}$ has a robust accuracy at most $\delta p/(1-p)$ against an $L_\infty$-bounded adversary with $\varepsilon \ge 2\eta$.''}, where  $\varepsilon$ is the $L_\infty$ bound indicating the attack strength, while $\eta$ is sufficiently large and $p\geq 0.5$ and they both are used for data generation.
This statement presents a tradeoff between the adversarial robustness and natural accuracy. 
There has been a consistent effort invested to  mitigate such tradeoff by developing defense techniques to improve robust accuracy without sacrificing much the natural accuracy.

Ensemble  defense has recently arisen as a new  state of the art~\cite{ijcai2021p591,lu2022ensemble}. 
The core idea is to  train strategically multiple base classifiers  to defend the attack, with the underlying motivation of improving the statistical stability and   cooperation between base models.  
Existing effort on ensemble defense has mostly been focused on demonstrating performance success for different algorithmic approaches. 
It is assumed that  \emph{training and combining multiple base models can defend better adversarial attacks as compared to training a  single model}. 
Although being supported by empirical success, there is few research that provides rigorous  understanding to why this is the case in general.
Existing results on  analyzing generalized error for ensemble estimators  mostly compare the error of the ensemble  and the averaged error of the base models, through, for instance, decomposition strategies that divide the error term into bias, variance, co-variance, noise  and/or diversity terms~\cite{wood2023unified,brown2005managing,ueda1996generalization}.  
It is not straightforward  to  extend such results to compare the error of an ensemble model and the error of a model without an ensemble structure.

To address this gap, we   develop a new error  theory (Theorem \ref{main_theorem}) dedicated to  understanding ensemble adversarial defense.  
The main challenge in mitigating the tradeoff between the adversarial robustness and natural accuracy comes from the fact that  adversarial defence techniques can reduce the classifier's capacity of handling weakly separable example pairs that are close to each other but from different classes.
To analyse how ensemble helps address this particular challenge, we derive a provable error reduction by changing from using one neural network to  an ensemble of two neural networks through either the average or max combiner of their prediction outputs.

Although ensemble defense can improve  the overall adversarial robustness, its base models can still be  fooled individually in some input  subspaces,  due to the nature of the collaborative design. 
Another contribution we make is the proposal of a simple but effective way to improve each base model by considering the adversarial examples generated by their  ensemble  accompanied by a regularization term that is designed to recuse the worst base model.
We experiment with the proposed enhancement by applying it to improve four state-of-the-art ensemble  adversarial  defense techniques. 
Satisfactory performance improvement has been  observed when being evaluated using CIFAR-10 and CIFAR-100 datasets~\cite{Zagoruyko2016WRN}  under both white-box and black-box attacks.

\section{Related Work} \label{related}

\textbf{Adversarial Attack:} Typical attack techniques include white-box and black-box attacks.  
The white-box adversary has access to the model, e.g., the model parameters, gradients and formulation, etc, while the black-box  adversary has limited knowledge of the model, e.g., knowing only the model output, and therefore is closer to the real-world attack setting~\cite{bhambri2019survey}.
Representative white-box attacks include Deepfool~\cite{moosavi2016deepfool}   and the projected gradient descent (PGD)~\cite{madryMSTV18towards}. 
The  fast gradient sign method (FGSM)~\cite{goodfellow2014explaining} is a simplified one-iteration version of PGD. 
The momentum iterative method (MIM)~\cite{dong2018boosting} improves the FGSM attack by introducing the momentum term.
Another two commonly-used and cutting-edge white-box attacks are the Carlini-Wagner (CW) attack~\cite{carlini2017towards} that computes the perturbation by optimizing the perturbation scales, and the Jacobian-based saliency map attack (JSMA)~\cite{papernokey6limitations} that perturbs only one pixel in each iteration.
Representative black-box attacks include the square attack (SA)~\cite{andriushchenko2020square}, the SignHunter attack~\cite{al2020sign} and the simple black-box attack (SimBA)~\cite{guo2019simple}, among which SA can generate stronger attacks but is more computationally expensive.   
A thorough survey on adversarial attacks is provided by~\citet{akhtar2021advances}.  
AutoAttack~\cite{croce2020reliable} encapsulates a selected set of strong white and black-box attacks and is considered as the  state-of-the-art attack tool. 
In general, when evaluating the adversarial robustness of a deep learning model, it should be tested under both the white and black-box attacks.

\textbf{Adversarial Defense:} Adversarial training is the most straightforward and commonly used defense technique for improving adversarial robustness. 
It works by  simply expanding the training data with additional adversarial examples.   
The initial idea was firstly adopted by~\citet{SzegedyZSBEGF13} to train robust neural networks for classification, later on, \citet{madryMSTV18towards} used  adversarial examples generated by the PGD attack  for improvement. 
Overall, this strategy is very adaptive and can be used to defend any attack, but it is on the expense of  consuming more training examples.
It has been empirically observed that  adversarial training  can reduce  loss curvatures in both the input  and model parameter spaces~\cite{FreemanB17Topology,fawzi2018empirical}, and this reduces the adversarial robustness gap between the training and testing data~\cite{pmlrv97zhang19p,wu2020adversarial}. 
These findings motivate the development of a series of regularization techniques for adversarial defense, by explicitly regularizing the curvature or other relevant geometric characteristics of the loss function. 
Many of these techniques can avoid augmenting the training set and  are computationally cheaper. 
Typical regularization techniques that attempt to flatten the loss surface in the input space include the curvature regularization (CURE)~\cite{moosavi2019robustness}, local linearity regularization (LLR)~\cite{qin2019adversarial}, input gradient regularization~\cite{najafi2019robustness} and Lipschitz regularization~\cite{virmaux2018lipschitz}. 
There are also  techniques to flatten the loss in the model parameter space, such as TRADES~\cite{pmlrv97zhang19p}, misclassification
aware adversarial training (MART)~\cite{wang2019improving}, robust self-training (RST)~\cite{carmon2019unlabeled,raghunathan2020understanding}, adversarial weight perturbation (AWP)~\cite{wu2020adversarial}, and HAT~\cite{wu2021wider}, etc.
Alternative to adversarial training and regularization,  other defense strategies include pruning~\cite{sehwag2020hydra}, pre-training~\cite{hendrycks2019pretraining}, feature denoising~\cite{xie2019feature}, domain adaptation~\cite{SongHWH19Improve} 
and ensemble defense~\cite{lu2022ensemble,tramer2017ensemble}, etc.  
\citet{croce2020robustbench} has provided a summary of  recent advances.

\textbf{Ensemble Adversarial Defense:}  Recently, ensemble has been actively used in adversarial defense, showing promising results. 
One core spirit behind ensemble is to encourage diversity between base models in order to achieve an improved ensemble prediction ~\cite{wood2023unified,brown2005managing,ueda1996generalization}.
Therefore,  the advances  are mostly focused on designing effective ensemble diversity losses in training to improve adversarial robustness. 
For instance, the adaptive diversity promoting (ADP) method~\cite{pang2019improving} uses Shannon entropy for uncertainty regularization and a geometric diversity for measuring the difference between the   predictions made by base classifiers.  
The transferability reduced smooth (TRS) method~\cite{yang2021trs}   formulates the diversity term based on cosine similarities between the loss gradients of base models, meanwhile increases the model smoothness via an $l_2$-regularization of these gradients during training. 
The similar idea of  exploiting loss gradients was proposed earlier in  the gradient alignment loss (GAL) method~\cite{kariyappa2019improving}.  
The conditional label dependency learning (CLDL) method~\cite{wang2023adversarial} is a latest improvement  over GAL  and TRS, which measures the diversity using both the predictions and loss gradients of the base models.  
However, the ensemble nature of encouraging diversity can cause vulnerability for some base models over certain subsets of adversarial examples~\cite{moosavi2019robustness}.  
In practice, this can  limit the overall robustness of the ensemble when the other base models are not strong enough to correct the weak ones. 
To address this,  the diversifying vulnerabilities for enhanced robust generation of ensembles (DVERGE)~\cite{yang2020dverge}  proposes a vulnerability diversity to encourage each base model to be robust particularly to the other base models' weaknesses.  
The latest development for improving ensemble defense, known as synergy-of-experts  (SoE)~\cite{cui2022synergy}, follows a different research path. 
For each input, it  adaptively selects a base model with the largest confidence to make the final prediction instead of combining all, for which the supporting algorithm and theory have been developed.
Some surveys on ensemble adversarial attacks and defense  can be found in \citet{he2022revisiting,lu2022ensemble}.

\section{Notations and Preliminaries}
\label{notation}

Bold capital  and lower-case letters, e.g., $\mathbf{X}$ and $\mathbf{x}$, denote matrices and  vectors, respectively, while lower-case letters, e.g., $x$, denote scalars. 
The $i$-th row and column of a matrix $\mathbf{X}$ are denoted by $\mathbf{x}_i$ and $\mathbf{x}^{(i)}$,  respectively, while $x_{i,j}$ and $x_i$  the elements of $\mathbf{X}$ and $\mathbf{x}$. 
A  classification  dataset $D = \left\{\left(\mathbf{x}_i, y_i\right)\right\}_{i=1}^n$ includes $n$ examples, which are referred to as \textit{natural examples},  with $\mathbf{x}_i \in \mathcal{X} \subset \mathbb{R}^d$ (feature vector) and  $y_i\in [C]=\{1,2,\ldots, C\}$ (class label).  
We sometimes express the label of an example as  $y(\mathbf{x})$ or $y_{\mathbf{x}}$.
Storing  $\mathbf{x}_i$ as a row of   $\mathbf{X} \in \mathbb{R}^{n\times d}$ and   $y_i$  an element of  $\mathbf{y} \in \mathbb{R}^n$, we also denote this dataset by $D= (\mathbf{X}, \mathbf{y})$.
The  classifier  $\mathbf{f}:\mathcal{X}\to [0,1]^C$ outputs  class probabilities  usually computed by a softmax function. 
Given  the computed probability $f_c$ for the $c$-th class,   $ \hat{y}_{\mathbf{f}}(\mathbf{x}) = \arg\max_{c\in [C]} f_c(\mathbf{x}) $ predicts the class.
For a neural network,  we denote by $\mathbf{W}^{(l)}$ the weight matrix connecting the $l$-th and the $(l-1)$-th layers and by $w^{(l)}_{i,j}$  its $ij$-th element.
The $L_2$-norm $\|\cdot\|_2$ is used to compute the vector length, while the $L_\infty$-norm   $\|\cdot\|_\infty$ to generate adversarial attacks. Concatenation of two sets is denoted by the symbol $\cup$.

We focus on classification by minimizing a classification loss  $\ell(\mathbf{f}(\mathbf{x}), y_{\mathbf{x}})$, and adapt it to $\ell(\mathbf{f}(\mathbf{X}), \mathbf{y})$ for the whole dataset. 
Also, we use $\ell_{CE} $ to emphasize the cross-entropy loss.
The loss gradient  is $\nabla  \ell (\mathbf{f}(\mathbf{x}), y_{\mathbf{x}})=\frac{\partial \ell(\mathbf{f}(\mathbf{x}), y_{\mathbf{x}})}{\partial \mathbf{x}}$. 
A cheap way to estimate the loss curvature is  by finite difference approximation~\cite{moosavi2019robustness}, e.g.,  the following curvature  measure based on $L_2$-norm:
\begin{equation}
	\label{app_cur}
	\lambda_{\mathbf{f}}(\mathbf{x}, \bm\delta)=\frac{ \left \|  \nabla \ell (\mathbf{f}\left(\mathbf{x} +\bm\delta\right), y_{\mathbf{x}})- \nabla \ell (\mathbf{f}(\mathbf{x}), y_{\mathbf{x}}) \right \|_2}{ \|\bm\delta\|_2},
\end{equation}
where $\bm\delta \in \mathbb{R}^d$ is a perturbation. It measures how a surface bends at a point by different amounts in different directions. 
An adversarial example  $\tilde{\mathbf{x}} = \bm\phi(\mathbf{f}, \mathbf{x}, A)$ is generated by attacking  the classifier $\mathbf{f}$ using an attack algorithm $A$ on a natural example $\mathbf{x}$. 
It is further adapted to  $\tilde{\mathbf{X}} = \bm\phi(\mathbf{f}, \mathbf{X}, A)$ for the set of adversarial examples each generated from a natural example in $\mathbf{X}$. 
The quantity $\bm\delta(\mathbf{f}, \mathbf{x}, A)$ $=\bm \phi(\mathbf{f}, \mathbf{x}, A)-\mathbf{x}$ is referred to as the \textit{adversarial perturbation} of $\mathbf{x}$, simplified to $\bm\delta_{\mathbf{x}} = \tilde{\mathbf x} -\mathbf{x}$. 
To control the perturbation strength, we restrict  $\left\|\bm\delta_{\mathbf{x}} \right\|_\infty\le\varepsilon$ for some $\varepsilon>0$, which results in the following adversarial example formulation, as
\begin{equation}
	\bm \phi_{\varepsilon}(\mathbf{f}, \mathbf{x}, A)=\min(\max(\bm\phi(\mathbf{f}, \mathbf{x}, A), \mathbf{x}-\varepsilon), \mathbf{x}+\varepsilon),
\end{equation}
where both $\min(\cdot, \cdot )$ and $\max(\cdot,\cdot)$ are element-wise operators comparing their inputs.

\section{An Error Theory for Adversarial Ensemble Defense} \label{sec_theory}

In adversarial ensemble defense, a widely accepted research hypothesis is that training and combining multiple base classifiers can improve adversarial defense as compared to training a  single classifier. 
However, this hypothesis is mostly supported by empirical successes and there is a lack of formal theoretical justification.
In this work, we seek  theoretical evidence, proving that, when using multilayer perceptrons (MLPs) for classification,  classification error reduces when applying adversarial defence to the base MLPs of an ensemble as compared to a single MLP, under  assumptions feasible in practice. 
The following theorem formalizes our main result.
\begin{thm}
Suppose $\mathbf{h},\mathbf{h}^0, \mathbf{h}^1\in \mathcal{H}:\mathcal{X}\to [0,1]^C$ are  $C$-class $L$-layer MLPs satisfying  Assumption \ref{MLP_assu}.
Given a dataset  $D=\left\{( \mathbf{x}_i, y_i)\right\}_{i=1}^n$,  construct an ambiguous pair set $A(D)$ by Definition \ref{ass_borderline_data}.  Assume $\mathbf{h},\mathbf{h}^0, \mathbf{h}^1$ are acceptable classifiers for $A(D)$ by Assumption \ref{assu_simple_classification}.  Given a classifier $\mathbf{f}\in \mathcal{H}:\mathcal{X}\to\mathbb{R}^C$ and a dataset $D$, assess its classification error by 0-1 loss, as
\begin{equation}
	\label{eq_01_loss}
	\hat{\mathcal{R}}_{0/1}(D, \mathbf{f})=\frac{1}{|D|}\sum_{\mathbf{x}\in D}{1} \left[f_{y_\mathbf{x}}(\mathbf{x})<\max_{c\ne y_\mathbf{x}}f_c(\mathbf{x})\right],\\
\end{equation} 
where $1[\textmd{true}] =1$ while  $1[\textmd{false}] =0$.
For an ensemble $\mathbf{h}^{(0,1)}_e$ of   two base MLPs $\mathbf{h}^0$ and $\mathbf{h}^1$ through either an average or a max combiner, i.e., $\mathbf{h}^{(0,1)}_e= \frac{1}{2}(\mathbf{h}^0 + \mathbf{h}^1)$ or $\mathbf{h}^{(0,1)}_e=\max(\mathbf{h}^0, \mathbf{h}^1)$, it has a lower empirical 0-1 loss than a single MLP for classifying ambiguous examples, such as
\begin{equation}
\label{eq_thm_1}
 	 \mathbb{E}_{a\sim A(D)}\mathbb{E}_{\mathbf{h}^0,\mathbf{h}^1\in\mathcal{H}}\left[\hat{\mathcal{R}}_{0/1}\left(a,  \mathbf{h}^{(0,1)}_e \right)\right] <  \mathbb{E}_{a\sim A(D)}\mathbb{E}_{\mathbf{h}\in\mathcal{H}}\left[\hat{\mathcal{R}}_{0/1}\left(a,\mathbf{h}\right)\right].
\end{equation}
	\label{main_theorem}
\end{thm}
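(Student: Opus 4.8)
The plan is to reduce the population-level inequality to a per-pair statement and then exploit the geometry of ambiguous pairs. By linearity of expectation over $a\sim A(D)$, it suffices to show that for each ambiguous pair $a=\{(\mathbf{x}_0,y_0),(\mathbf{x}_1,y_1)\}$ with $y_0\neq y_1$ the expected ensemble loss is at most the expected single-classifier loss, with strict inequality on a set of pairs of positive measure. Fixing such a pair, the per-pair $0/1$ loss of any classifier is $\tfrac{1}{2}$ times the number of misclassified endpoints. The first key step is to use Definition~\ref{ass_borderline_data} and Assumption~\ref{MLP_assu} to argue that an acceptable MLP, being smooth on the scale separating $\mathbf{x}_0$ and $\mathbf{x}_1$, assigns the \emph{same} predicted label to both endpoints, and that by Assumption~\ref{assu_simple_classification} this common label lies in $\{y_0,y_1\}$. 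Consequently a single classifier is correct on exactly one endpoint, so its per-pair loss equals $\tfrac{1}{2}$ deterministically and the right-hand side of (\ref{eq_thm_1}) collapses to $\tfrac{1}{2}$.

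For the ensemble I would condition on the joint prediction pattern of the two independent draws $\mathbf{h}^0,\mathbf{h}^1$. Call $\mathbf{h}^i$ a ``$y_0$-classifier'' or a ``$y_1$-classifier'' according to the common label it assigns to the pair. If the two base classifiers \emph{agree}, then for either combiner the aggregated output again predicts that single label at both endpoints, and the ensemble inherits the loss $\tfrac{1}{2}$. The entire gain therefore comes from the \emph{disagreement} event, on which I claim both combiners classify both endpoints correctly and incur loss $0$.

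Establishing the disagreement case is the crux and I expect it to be the main obstacle. Suppose $\mathbf{h}^0$ is a $y_0$-classifier and $\mathbf{h}^1$ a $y_1$-classifier, so at $\mathbf{x}_0$ (true label $y_0$) classifier $\mathbf{h}^0$ is correct and $\mathbf{h}^1$ is wrong, and symmetrically at $\mathbf{x}_1$. What I need from Assumption~\ref{assu_simple_classification} is a confidence ordering: on an ambiguous pair a classifier is more confident at the endpoint it classifies correctly than a disagreeing classifier is at that same (for it, mistaken) endpoint. For the average combiner this makes $\tfrac{1}{2}(h^0_{y_0}(\mathbf{x}_0)+h^1_{y_0}(\mathbf{x}_0))$ exceed $\tfrac{1}{2}(h^0_{y_1}(\mathbf{x}_0)+h^1_{y_1}(\mathbf{x}_0))$, and the mirror inequality at $\mathbf{x}_1$, so the average predicts $y_0$ at $\mathbf{x}_0$ and $y_1$ at $\mathbf{x}_1$. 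For the max combiner the same ordering forces $\max(h^0_{y_0}(\mathbf{x}_0),h^1_{y_0}(\mathbf{x}_0))$ to be attained by the correct, confident classifier and to dominate $\max(h^0_{y_1}(\mathbf{x}_0),h^1_{y_1}(\mathbf{x}_0))$, again yielding the correct label; the delicate point is that the coordinatewise maximum can mix contributions across the two networks, so I must check that the ordering survives taking element-wise maxima rather than averaging. Pinning down the precise inequality in Assumption~\ref{assu_simple_classification} that guarantees this for \emph{both} combiners simultaneously is where the real work lies.

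Finally I would assemble the bound. Letting $q$ denote the probability, over the i.i.d.\ draw of $\mathbf{h}^0,\mathbf{h}^1$, that the two base classifiers disagree on the pair, the ensemble's per-pair expected loss is $\tfrac{1}{2}(1-q)$, strictly below the single-classifier value $\tfrac{1}{2}$ precisely when $q>0$. Positivity of $q$ follows from the ambiguity of the pair together with independence: since the common label is confined to $\{y_0,y_1\}$, writing $p$ for the probability of the $y_0$-outcome of a single draw gives $q=2p(1-p)$, which is positive as soon as the assumptions make both outcomes occur with positive probability, i.e. $0<p<1$. Integrating the strict per-pair inequality over $a\sim A(D)$ then yields (\ref{eq_thm_1}).
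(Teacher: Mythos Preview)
Your proposal has a genuine gap at its first key step. You assert that an acceptable MLP ``assigns the \emph{same} predicted label to both endpoints'' and hence has per-pair loss exactly $\tfrac{1}{2}$ deterministically. But the assumptions do not support this. Lemma~\ref{lem_2} together with Definition~\ref{ass_borderline_data} only gives $|h_c(\mathbf{x}_i)-h_c(\mathbf{x}_j)|\le \tfrac{1}{J}$, which by itself is too weak to force the argmax to coincide. The paper closes this gap only under the additional hypothesis $h_{y_i}(\mathbf{x}_i)\ge 0.5+\tfrac{1}{J}$ from part~1 of Assumption~\ref{assu_simple_classification}, and that hypothesis holds only with probability $p$, not almost surely. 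With the complementary probability the classifier's behaviour on the pair is unconstrained, and in particular both endpoints may be misclassified (possibly to a label outside $\{y_0,y_1\}$; part~2 of Assumption~\ref{assu_simple_classification} does not force $\hat{y}\in\{y_0,y_1\}$ either). The paper handles this explicitly as Case~III for the single classifier and Case~V for the ensemble, obtaining $\mathbb{E}[\hat{\mathcal{R}}_{0/1}(a,\mathbf{h})]=1-p+\tfrac{1}{2}p^2$ and an ensemble bound $\le 1-3p^2+3p^3-\tfrac{3}{4}p^4$, and then solves the resulting polynomial inequality, which is precisely where the threshold $p\ge 42.5\%$ enters.

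That your argument would yield strict improvement for \emph{every} $p\in(0,1)$ should have been a warning sign: the $42.5\%$ condition in Assumption~\ref{assu_simple_classification} is not decorative but is exactly what makes the comparison go through once the ``both wrong'' events are accounted for. Your dichotomy into ``$y_0$-classifiers'' and ``$y_1$-classifiers'' is therefore not exhaustive, and both your single-classifier value $\tfrac{1}{2}$ and your ensemble value $\tfrac{1}{2}(1-q)$ are incorrect under the stated assumptions. The overall architecture you outline (agreement gives loss $\tfrac{1}{2}$, disagreement gives loss $0$ via a confidence comparison) does match the paper's Cases~I--IV, but you must add the residual cases and carry the probability $p$ through the calculation rather than collapsing it away.
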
 

We prove the result for  MLPs satisfying the following assumption.
\begin{assu} [\textbf{MLP Requirement}]
\label{MLP_assu}
 Suppose a $C$-class $L$-layer MLP $\mathbf{h}: \mathbb{R}^d\rightarrow [0,1]^C$  expressed iteratively by 
\begin{align}
\label{mlp1}
\mathbf{a}^{(0)}(\mathbf{x}) &= \mathbf{x}, \\
\mathbf{a}^{(l)} (\mathbf{x}) &= \sigma\left(\mathbf{W}^{(l)}\mathbf{a}^{(l-1)} (\mathbf{x})\right),  l =1,2,...,L-1, \\
\mathbf{a}^{(L)} (\mathbf{x}) &=\mathbf{W}^{(L)}\mathbf{a}^{(L-1)} (\mathbf{x})= \mathbf{z}(\mathbf{x}), \\
\label{mlp2}
\mathbf{h}(\mathbf{x}) &= \textmd{softmax}(\mathbf{z}(\mathbf{x})),
\end{align}
where $\sigma(\cdot)$ is the activation function applied element-wise, the representation vector $\mathbf{z}(\mathbf{x})  \in \mathbb{R}^C$ returned by the $L$-th layer  is fed into  the prediction layer building upon the softmax function. 
Let $w_{s_{l+1},s_{l}}^{(l)}$  denote the network weight connecting the $s_{l}$-th neuron in the $l$-th  layer and the $s_{l+1}$-th neuron in the $(l+1)$-th layer for $l\in\{1,2\ldots, L\}$.   
Define  a  column vector $\mathbf{p}^{(k)}$ with its $i$-th element computed from the neural network weights and activation derivatives, as  $p^{(k)}_i = \sum_{s_L} \frac{\partial a^{(L-1)}_{s_L}(\mathbf{x})}{\partial x_k} w^{(L)}_{i,s_L} $ for $k=1,2, \ldots d$ and $i=1,2, \ldots C$, also a matrix $\mathbf{P}_\mathbf{h} = \sum_{k=1}^d\mathbf{p}^{(k)}{\mathbf{p}^{(k)}}^T$ and its factorization $\mathbf{P}_{\mathbf{h}} = \mathbf{M}_\mathbf{h}\mathbf{M}_\mathbf{h}^T$ with a full-rank factor matrix $\mathbf{M}_\mathbf{h}$.  
For constants $\tilde{\lambda},B>0$, suppose the following holds for $\mathbf{h}$: 
\begin{enumerate}
\item Its cross-entropy loss curvature measured by   Eq. (\ref{app_cur}) satisfies $\lambda_{\mathbf{h}}(\mathbf{x}, \bm\delta)\leq \tilde{\lambda}$.
\item  The factor matrix satisfies $\left\|\mathbf{M}_{\mathbf{h}} \right\|_2 \leq B_0$ and $\left\|\mathbf{M}_{\mathbf{h}}^{\dagger}\right\|_2 \leq B$, where $\|\cdot\|_2$ denotes the vector induced $l_2$-norm for matrix.
\end{enumerate}
 \end{assu}
We explain the feasibility of the above MLP assumptions in the end of this section.

Although adversarial defense techniques can improve adversarial robustness,  new challenges arise in classifying  examples that are close to each other but from different classes, due to the  flattened loss curvature for reducing the adversarial risk. 
We refer to  a pair of such challenging examples as an \emph{ambiguous  pair}.  Our strategy of proving improved performance for adversarial defense  is to (1) firstly construct  a challenging dataset $A(D)$  comprising samples from these  pairs, which is referred to as an \emph{ambiguous pair set}, and then (2) prove error reduction over $A(D)$.
To start, we provide formal definitions for the ambiguous pair and set.
\begin{defn}[\textbf{Ambiguous Pair}] \label{ass_borderline_data}
Given a dataset  $D=\left\{( \mathbf{x}_i, y_i)\right\}_{i=1}^n$ where $\mathbf{x}_i\in \mathcal{X}$ and $y_i \in [C]$, an \emph{ambiguous pair} contains two examples $a = ((\mathbf{x}_i, y_i), (\mathbf{x}_j , y_j))$   satisfying $y_i \neq y_j$ and 
	\begin{align}
	&\left \|  \mathbf{x}_i - \mathbf{x}_j \right\|_2 \le \frac{1}{J B\sqrt{C\left(\tilde{\lambda}^2 - \xi\right)}},
	\label{eq_ambig_cond}
	\end{align}
where $J>2$ is an adjustable control variable,   $\tilde{\lambda}$, $B$ and $\xi\le \tilde{\lambda}^2$ are constants associated with the MLP  under  Assumption \ref{MLP_assu}.
The \emph{ambiguous pair set} $A(D)$ contains  all the  ambiguous  pairs  existing in $D$, for which $J$ is adjusted such that $A(D)\neq  \emptyset$.
\end{defn}

In Theorem \ref{main_theorem}, we are only interested in classifiers that do not fail too badly on $A(D)$, e.g., having an accuracy level above $42.5\%$.  Comparing poorly performed classifiers is not  very meaningful, also the studied situation is  closer to practical setups where  the starting classifiers for improvement are somewhat acceptable.    Such a preference is formalized by the  following assumption:
\begin{assu} [\textbf{Acceptable Classifier}]
	\label{assu_simple_classification}
	Suppose an acceptable classifier $\mathbf{f}: \mathbb{R}^d\rightarrow [0,1]^C$ does not perform  poorly on the ambiguous pair set $A(D)$ associated with a control variable $J$. This means that,  for any pair $a=((\mathbf{x}_i,  y_i), (\mathbf{x}_j, y_j)) \in A(D)$ and for any example $(\mathbf{x}_i, y_i)$ from the pair, the following  holds: 
	\begin{enumerate}
			\item With a probability $p\geq 42.5\% $, the classifier correctly classifies  $(\mathbf{x}_i, y_i)$ by a sufficiently large  predicted score, i.e.,  $f_{y_i}(\mathbf{x}_i)  \ge0.5 + \frac{1}{J}$, while wrongly classifies the other example $\mathbf{x}_j$ to $y_i$ by a less score,  i.e.,  $f_{y_i}(\mathbf{x}_j)  \le 0.5 + \frac{1}{J}$.	
			\item When the classifier predicts $(\mathbf{x}_i, y_i)$  to class $\hat{y}_i$,  the predicted scores for the other classes excluding $y_i$ are sufficiently small, i.e., $f_{c}(\mathbf{x}_i)\leq \frac{1- f_{\hat{y}_i}(\mathbf{x}_i)}{C-1}$ for $c \neq y_i, \hat{y}_i$.
	\end{enumerate}
\end{assu}
Proof for Theorem \ref{main_theorem} together with a toy illustration example is provided in supplementary material.  

\textbf{Assumption Discussion.}
Assumption \ref{MLP_assu} is feasible in practice. Reduced loss curvature is a natural result from adversarial defense, particularly for  adversarial training and regularization based methods~\cite{fawzi2018empirical,FreemanB17Topology} as mentioned in Section \ref{related}.  Regarding its  second part determined by neural network weights and activation derivatives, common training practices  like weight regularization and normalization help prevent from obtaining overly inflated   elements in $\mathbf{P}_{\mathbf{h}}$, and thus bound  $\left\|\mathbf{M}_{\mathbf{h}}\right\|_2$ and $\left\|\mathbf{M}_{\mathbf{h}}^{\dagger}\right\|_2$. 
Following Definition \ref{ass_borderline_data}, the ambiguous pair $a= ((\mathbf{x}_i, y_i), (\mathbf{x}_j, y_j))$ is constructed to let the classifier struggle with classifying the  neighbouring example, e.g., $(\mathbf{x}_j, y_j)$, when it is able to classify successfully, e.g.,  $(\mathbf{x}_i, y_i)$. 
Consequently, the success of classifying $(\mathbf{x}_i, y_i)$ is mostly accompanied with a  failure of classifying $(\mathbf{x}_j, y_j)$ into  $y_i$, and vice versa.
In Assumption \ref{assu_simple_classification}, for an acceptable classifier, the first part assumes its failure is fairly mild, while the second part  assumes its struggle is  between $y_i$ and $y_j$. 
As shown in our proof of Theorem \ref{main_theorem}, in order for Eq. (\ref{eq_thm_1}) to hold, a polynomial inequality of the probability $p$ needs to be solved, providing a sufficient condition on achieving a reduced ensemble risk, i.e., $p\geq 42.5\% $.
Later, we conduct experiments to examine how well some assumptions are met by adversarially trained classifiers and report the results in supplementary material. 

\section{iGAT: Improving Ensemble Mechanism} \label{proposed}

Existing ensemble adversarial defense techniques mostly base their design on a framework of combining classification loss and diversity for training. 
The output of each base  classifier contains the probabilities of an example belonging to  the $C$ classes. 
For an input example $\mathbf{x}\in \mathcal{X}$, we denote its output from the $i$-th base  classifier by $\mathbf{h}^i(\mathbf{x})=\left[h_1^i(\mathbf{x})...,h_C^i(\mathbf{x})\right]$ for $i\in[N]$, where $N$  denotes the number of used base classifiers. 
Typical practice for combining base predictions includes  the averaging, i.e.,  $\mathbf{h}(\mathbf{x})= \frac{1}{N}\sum_{i=1}^N\mathbf{h}^i(\mathbf{x})$, or the max operation, i.e.,  $h_j(\mathbf{x}) =   \max_{i\in[N]}  \left(h_j^i(\mathbf{x}) \right)$. 
Without loss of generality, we  denote the combiner by $\mathbf{h}=\mathbf{c}\left(\mathbf{h}^1,...,  \mathbf{h}^{N}\right)$.
To train the base classifiers,  we exemplify  an ensemble loss function using one training example $(\mathbf{x}, y_{\mathbf{x}})$, as below
\begin{equation}
	\label{eq_el}
	L_{E}(\mathbf{x}, y_{\mathbf{x}})= \underbrace{\sum_{i=1}^N \ell(\mathbf{h}^i(\mathbf{x}), y_{\mathbf{x}})}_{\textmd{classification loss}} + \omega \textmd{Reg}\left(\mathbf{h}(\mathbf{x})\right)  + \gamma \textmd{Diversity}(\mathbf{h}^1(\mathbf{x}), \mathbf{h}^2(\mathbf{x}), \ldots,\mathbf{h}^N(\mathbf{x}), y_{\mathbf{x}})),
\end{equation}
where $\omega,\gamma\geq 0$ are hyperparameters. 
An example choice for regularization is the Shannon entropy of the ensemble  $\mathbf{h}(\mathbf{x})$~\cite{pang2019improving}. 
Significant research effort has been invested to diversity design, for which it is optional whether to use the class information in diversity calculation. 
In  the first section of   supplementary material, we briefly explain four ensemble adversarial defense techniques highlighting their loss design strategies. 
These include ADP~\cite{pang2019improving}, CLDL~\cite{wang2023adversarial}, DVERGE~\cite{yang2020dverge} and SoE~\cite{cui2022synergy}, and they are used later in Section \ref{exp} to test our proposed enhancing approach.

Despite the effort in diversity design that encourages better collaboration between base classifiers, it is unavoidable for  some  base classifiers to struggle with classifying examples from certain input subspaces. 
There are  intersected subspaces that all the base classifiers are not good at classifying.
To address this, we propose  an \emph{interactive global adversarial training} (iGAT) approach.
It seeks support from adversarial examples globally generated by the ensemble and distributes these examples to base classifiers with a probabilistic strategy empirically proven effective. 
Additionally, it introduces another regularization term to improve over the  severest weakness of the base classifiers.  
Below we describe our proposal in detail.

\subsection{Distributing Global Adversarial Examples} \label{global_adv}

We aim at improving adversarial robustness over intersected feature subspaces which are hard for all base classifiers to  classify. These regions  can be approximated by  global adversarial examples generated by applying adversarial attacks to the ensemble, which are
\begin{equation}
	(\tilde{\mathbf{X}}, \tilde{\mathbf{y}})= \left( \bm \phi_\varepsilon(\mathbf{c}\left(\mathbf{h}^1,...,  \mathbf{h}^{N}\right),\mathbf{X}, A), \mathbf{y} \right),
	\label{eq_general_attack}
\end{equation}
where  rows of $\tilde{\mathbf{X}}$ store the feature vectors of the generated  adversarial examples.
For instance, the FGSM attack can be used as $A$.
Instead of feeding the same full   set of adversarial examples  to train each base classifier, we distribute different examples to different base classifiers,  to improve performance and to reduce  training time. 
The generated examples are divided into $N$ groups  according to their predicted class probabilities. The $i$-th group   $ \left(\tilde{\mathbf{X}}^i, \tilde{\mathbf{y}}^i\right)$ is used to train the $i$-th base classifier, contributing to its classification loss.

Our core distributing strategy  is to encourage each base classifier to keep improving over regions that they  are  relatively good at classifying. 
This design is motivated by our theoretical result.  
We have  proved in Theory \ref{main_theorem} an error reduction achieved by the ensemble of base MLPs that satisfy the acceptability Assumption \ref{assu_simple_classification}.  
This assumption is partially examined by whether the classifier returns a sufficiently high  prediction score for the correct class or low scores for most of the incorrect classes for some challenging examples. 
By keeping assigning each base classifier new challenging examples  that they  are   relatively good at classifying,  it encourages Assumption \ref{assu_simple_classification} to continue to hold. 
In Section \ref{ablation}, we perform  ablation studies to compare our proposal with a few other distributing strategies,  and the empirical results also verify our design.  
Driven by this strategy, we propose one hard and one soft distributing rule.

\textbf{Hard Distributing Rule:} Given a generated adversarial example $(\tilde{\mathbf{x}}, y) =  \left(\big(\tilde{\mathbf{X}}\big)_k, \tilde{y}_k\right)$,  the following  rule determines which base classifier to assign  it:
\begin{equation}
	\begin{split}
		& \textmd{If }   h^i_{y}(\tilde{\mathbf{x}})>\max_{j\ne i, j\in[N]}h^j_{y}(\tilde{\mathbf{x}}),  \textmd{assign } (\tilde{\mathbf{x}}, y) \textmd{ to }  \left(\tilde{\mathbf{X}}^i, \tilde{\mathbf{y}}^i\right). 
	\end{split}
	\label{eq_split_global_data}
\end{equation}
We refer to it as a hard distributing rule as it simply assigns examples in a deterministic way. The example is assigned to the base classifier that returns the highest predicted probability on its ground truth class.

\textbf{Soft Distributing Rule:} 
A hard assignment like the above can be sensitive to errors. 
Alternatively,  we propose  a soft  distributing rule that utilizes the ranking of the base classifiers based on their prediction performance meanwhile introduces uncertainty.
It builds upon roulette wheel selection~\cite{blickle1996comparison}, which is a commonly used genetic operator in genetic algorithms for selecting promising  candidate solutions.
Firstly, we rank  in descending order the predicted probabilities $\{h^i_{y}(\mathbf{x})\}_{i=1}^N$ by all the base classifiers  for the ground truth class, and let $r_\mathbf{x}\left(\mathbf{h}^i\right) \in [N]$ denote the obtained ranking for the $i$-th base classifier. 
Then, we formulate a ranking-based  score for each base classifier as
\begin{equation}
	\label{eq_biased_sampling}
	p_i = \frac{2^{N-r_\mathbf{x}\left(\mathbf{h}^i\right)}}{\sum_{i\in[N]} 2^{i-1} },
\end{equation}
and it satisfies $\sum_{i\in[N]} p_i =1$. A  more top ranked base classifier has higher  score. 
Next, according to $\{p_i\}_{i=1}^N$, we apply roulette wheel selection and  distribute the example to the selected base classifier. 
Specifically, the selection algorithm constructs $N$ intervals $\{[a_i, b_i]\}_{i=1}^N$ where $a_1 = 0$, $b_1 = p_1$,  also $a_i = b_{i-1}$ and $b_i = a_i + p_i$ for $i=2, 3, \ldots, N$. 
After sampling a number $q\in (0,1]$ following a uniform distribution $q\sim U(0,1)$, check which interval $q$ belongs to. If $a_i < q \leq b_i $,  then the example is  used to train the $i$-th base classifier.
This enables to assign examples based on  ranking but in a probabilistic manner  in order to be more robust to errors.

\subsection{Regularization Against Misclassification} \label{reg_misclass}

We  introduce another regularization term to address the  severest weakness, by minimizing the probability score of the most incorrectly  predicted class by the most erroneous  base classifier. 
Given an input example $(\mathbf{x}, y_{\mathbf{x}})$, the proposed  term is formulated as
\begin{equation}
L_R\left(\mathbf{x}, y_{\mathbf{x}} \right)= - \delta_{0/1}\left(\mathbf{c}\left(\mathbf{h}^1(\mathbf{x}),...,  \mathbf{h}^{N}(\mathbf{x})\right), y_{\mathbf{x}}\right)\log\left(1- \max_{i=1}^C\; \max_{j=1}^N  h^{j}_{i}(\mathbf{x}) \right).
	\label{eq_regularization}
\end{equation}
Here, $\delta_{0/1}(\mathbf{f}, y) \in \{0,1\}$ is an error function, where if the input classifier $\mathbf{f}$ can predict the correct label $y$,  it returns $0$, otherwise $1$.  
This design is also motivated by Assumption \ref{assu_simple_classification}, to encourage a weak base classifier to perform less poorly on challenging examples so that its chance of satisfying the acceptability assumption can be increased.

\subsection{Enhanced Training and Implementation} \label{implementation}

The proposed enhancement approach iGAT, supported by (1)   the  global adversarial examples generated and distributed following Section \ref{global_adv} and (2) the regularization term proposed in Section \ref{reg_misclass}, can be applied to any given ensemble adversarial defense method. 
We use $L_E$ to denote the original ensemble loss as in Eq. (\ref{eq_el}), the enhanced loss for training the base classifiers become
\begin{align}
	\label{eq_final_loss}
	\min_{\{\mathbf{h}^i\}_{i=1}^N} \;  \underbrace{\mathbb{E}_{(\mathbf{x}, y_{\mathbf{x}})\sim (\mathbf{X}, \mathbf y)} \left[L_{E}(\mathbf{x}, y_{\mathbf{x}})\right]}_{\textmd{original ensemble loss}} &\; +  \underbrace{\alpha \sum_{i=1}^N \mathbb{E}_{(\mathbf{x}, y_{\mathbf{x}})\sim \left(\tilde{\mathbf{X}}^i, \tilde{\mathbf{y}}^i\right)} \left[	 \ell_{CE}(\mathbf{h}^i( \mathbf{x}), y_{ \mathbf{x}})  \right]}_{\textmd{added global adversarial loss}}\\
	\nonumber
	&\; +  \underbrace{\beta \mathbb{E}_{(\mathbf{x}, y_{\mathbf{x}})\sim (\mathbf{X}, \mathbf y)\cup\left(\tilde{\mathbf{X}},  \tilde{\mathbf{y}} \right) } \left[L_R\left(\mathbf{x}, y_{\mathbf{x}} \right)\right]}_{\textmd{added misclassification regularization}},
\end{align}
where $\alpha, \beta\geq 0$ are hyper-parameters.
In practice, the base classifiers are firstly trained using an existing ensemble adversarial defense technique of interest, i.e., setting $\alpha=\beta=0$. If  some pre-trained base classifiers are available,  they can be directly used instead, and  fine-tuned with the complete loss. In our implementation, we employ the PGD attack to generate adversarial training examples, as it is the most commonly used in existing literature and in practice.

\section{Experiments and Results of iGAT}
\label{exp}

In the experiments, we compare with six state-of-the-art  ensemble adversarial defense techniques including ADP~\cite{pang2019improving}, CLDL~\cite{wang2023adversarial}, DVERGE~\cite{yang2020dverge},  SoE~\cite{cui2022synergy}, GAL~\cite{pang2019improving} and TRS~\cite{yang2021trs}.   The CIFAR-10  and CIFAR-100 datasets are used for evaluation, both containing 50,000 training and 10,000 test images~\cite{Zagoruyko2016WRN}. Overall,  ADP, CLDL, DVERGE and SoE appear to be the top  performing methods, and  we apply iGAT\footnote{The source codes and pre-trained models can be found at \href{https://github.com/xqsi/iGAT}{https://github.com/xqsi/iGAT}.} to enhance them.  
The enhanced, referred to as $\textmd{iGAT}_{\textmd{ADP}}$, $\textmd{iGAT}_{\textmd{CLDL}}$, $\textmd{iGAT}_{\textmd{DVERGE}}$ and $\textmd{iGAT}_{\textmd{SoE}}$, are compared with their original versions, and additionally GAL~\cite{pang2019improving} and TRS~\cite{yang2021trs}.

\subsection{Experiment Setting}

We test against  white-box attacks including PGD with $20$ inner optimization iterations and  CW with $L_{\infty}$ loss implemented by~\citet{wu2020adversarial}, and the black-box  SignHunter (SH) attack~\cite{al2020sign}  with $500$ maximum loss  queries. 
In accordance with~\citet{carmon2019unlabeled}, the CW attack is  applied on 1,000 equidistantly sampled testing examples.
We also test against the strongest AutoAttack (AA)~\cite{croce2020reliable}, which encapsulates variants of the PGD attack and the black-box  square  attack~\cite{andriushchenko2020square}.  
All attack methods use the perturbation strength $\varepsilon =8/255$.

For all the compared methods, an ensemble of $N=8$ base classifiers with ResNet-20~\cite{he2016deep} backbone is experimented, for which results of both the average and max output combiners are reported.
To implement the iGAT enhancement, the soft distributing rule from Eq. (\ref{eq_biased_sampling})  is used. The two hyper-parameters are set as $\alpha=0.25$ and $\beta=0.5$  for SoE, while $\alpha=5$ and $\beta=10$  for  ADP, CLDL and DVERGE,  found by grid search. Here SOE uses a   different parameter setting  because its loss construction differs from the others, thus it requires a different scale of the parameter range for tuning $\alpha$ and $\beta$. 
In practice, minor adjustments to hyper-parameters have little impact on the results.
The iGAT training uses a batch size of $512$,  and multi-step leaning rates of $\{0.01, 0.002\}$ for CIFAR10 and $\{0.1, 0.02, 0.004\}$ for CIFAR100. 
Implementation of existing methods uses either their pre-trained models or their source code for training that are publicly  available.
Each experimental run used one NVIDIA V100 GPU plus 8 CPU cores.

\begin{table}[!t]
	\caption{Comparison of classification accuracies in percentage reported on natural images and adversarial examples generated by different attack algorithms under  $L_\infty$-norm perturbation strength $\varepsilon=8/255$. The results are averaged over five independent runs. 
	The best performance is highlighted in bold,  the 2nd best underlined.}
	\label{tab_test_cifar}
	\vskip 0.15in
	\begin{center}
		\begin{small}
			\resizebox{0.95\textwidth}{!}{\begin{tabular}{l|r|ccccc|ccccc}
					\toprule
					\multicolumn{2}{c|}{} & \multicolumn{5}{c|}{Average Combiner (\%)} & \multicolumn{5}{c}{ Max Combiner (\%)}\\
					\cmidrule(r){3-12}  
					\multicolumn{2}{c|}{} & { Natural}  & { PGD} & { CW} & {SH} & {AA} & { Natural}  & { PGD} & { CW} & {SH} & {AA} \\
					\midrule
					\midrule	
					\multirow{5}{*}{\rotatebox[origin=c]{90}{CIFAR10}} 
					& TRS & $83.15$ & $12.32$ & $10.32$ & $39.21$ & $9.10$  & $82.67$ & $11.89$ & $10.78$ & $37.12$ & $7.66$ \\
					\cmidrule(r){2-12}
					& GAL  & $80.85$ &  $41.72$ & $41.20$ &  $54.94$ & $36.76$  & $80.65$ & $31.95$ & $27.80$ & $50.68$ & $9.26$  \\
					\cmidrule(r){2-12}
					& SoE & $82.19$ & $38.54$ & $37.59$ & $\textbf{59.69}$ & $32.68$   & $82.36$ & $32.51$ &$23.88$  & $41.04$ & $18.37$\\
					& $\textmd{iGAT}_{\textmd{SoE}}$  & $81.05$& ${40.58}$ & ${39.65}$ & $57.91$ & ${34.50}$   & $81.19$ & $31.98$ & $24.01$ & $40.67$& $19.65$\\
					\cmidrule(r){2-12}					
					& CLDL  & $84.15$ & $45.32$ & $41.81$ & $55.90$ & $37.04$  & $83.69$ & $39.34$ & $32.80$ & $51.63$ &  $15.30$ \\
					& $\textmd{iGAT}_{\textmd{CLDL}}$ & ${85.05}$ & $\underline{45.45}$ & ${42.00}$ & ${58.22}$ & ${37.14}$   & ${83.73}$ & ${40.84}$ & ${34.55}$ & ${51.70}$ & ${17.03}$\\
					\cmidrule(r){2-12}
					& DVERGE & $\underline{85.12}$ &  $41.39$ & $43.40$ & $57.33$ & $39.20$  & $\underline{84.89}$ & $\underline{41.13}$ & $\underline{39.70}$ & $\textbf{54.90}$ & $\underline{35.15}$\\
					& $\textmd{iGAT}_{\textmd{DVERGE}}$ & $\textbf{85.48}$ &  ${42.53}$ & $\underline{44.50}$ & ${57.77}$ & $\underline{39.48}$  & $\textbf{85.27}$ & $\textbf{42.04}$ & $\textbf{40.70}$ & $\underline{54.79}$ &  $\textbf{35.71}$ \\
					\cmidrule(r){2-12}
					& ADP  & $82.14$ &  $39.63$ & $38.90$ &  $52.93$ & $35.53$  & $80.08$ &  $36.62$ & $34.60$ &  $47.69$ & $27.72$ \\
					& $\textmd{iGAT}_{\textmd{ADP}}$  & ${84.96}$ &  $\textbf{46.27}$ & $\textbf{44.90}$ &  $\underline{58.90}$ & $\textbf{40.36}$  & ${80.72}$ & ${39.37}$ & ${35.00}$ & ${48.36} $& ${29.83}$ \\
					\midrule
					\midrule
					\multirow{5}{*}{\rotatebox[origin=c]{90}{CIFAR100}} 
					& TRS & $58.18$ & $10.32$ & $10.12$ & $15.78$ & $6.32$ & $57.21$ & $9.98$ & $9.23$ & $14.21$ & $4.34$  \\
					\cmidrule(r){2-12}
					& GAL  & $61.72$ &  $22.04$ & $\underline{21.60}$ &  $31.97$ & $\underline{18.01}$   & $59.39$ & $19.30$ & $13.60$ & $24.73$ & $10.36$ \\
					\cmidrule(r){2-12}
					& CLDL & $58.09$  & ${18.47}$   & ${18.01}$   & $29.33$ & ${15.52}$ & $55.51$ & $18.89$ & $13.07$ & $22.14$ & $4.51$\\
					& $\textmd{iGAT}_{\textmd{CLDL}}$ & ${59.63}$ & ${18.78}$ & ${18.20}$ & ${29.49}$ & $14.36$ & $56.91$ & $\underline{20.76}$ & $14.09$&  $20.43$ & $5.20$ \\
					\cmidrule(r){2-12}
					& SoE & $62.60$ & $20.54$ & $19.60$ & $\textbf{36.35}$ & $15.90$   & $\underline{62.62}$ & $16.00$ & $11.40$ & $24.25$ & $8.62$\\
					& $\textmd{iGAT}_{\textmd{SoE}}$ & $\textbf{63.19}$ & ${21.89}$ & ${19.70}$ & $\underline{35.60}$ & ${16.16}$   & $\textbf{63.02}$ & ${16.02}$ & ${11.45}$ & $23.77$ & ${8.95}$\\		
					\cmidrule(r){2-12}
					& ADP  & $60.46$ &  $20.97$ & $20.55$ &  $30.26$ & $17.37$   & $56.20$ & $17.86$ & $13.70$ & $21.40$ & $10.03$  \\
					& $\textmd{iGAT}_{\textmd{ADP}}$ & $60.17$ &  $\underline{22.23}$ & ${20.75}$ &  ${30.46}$ & ${17.88}$   & ${56.29}$ & ${17.89}$& ${14.10}$ & ${21.47}$ & ${10.09}$ \\
					\cmidrule(r){2-12}
					& DVERGE & $63.09$ &  $20.04$ & $20.01$ & $32.74$ & $17.27$  & $61.20$ & ${20.08}$ & $\underline{15.30}$ & $\underline{27.18}$ & $\underline{12.09}$ \\
					& $\textmd{iGAT}_{\textmd{DVERGE}}$ & $\underline{63.14}$ &  $\textbf{23.20}$ & $\textbf{22.50}$ & ${33.56}$ & $\textbf{18.59}$  & ${61.54}$ & $\textbf{20.38}$ & $\textbf{17.80}$ & $\textbf{27.88}$ & $\textbf{13.89}$ \\
					\bottomrule
				\end{tabular}
			}
		\end{small}
	\end{center}
	\vskip -0.1in
\end{table}

\begin{table}[t]
	\caption{Accuracy improvement in percentage by iGAT, i.e. $\frac{\textmd{iGAT- original}}{\textmd{original}}\times 100\%$,  reported on natural images and adversarial examples generated by different attack algorithms under  $L_\infty$-norm perturbation strength $\varepsilon=8/255$.}
	\label{tab_test_cifar1}
	\vskip 0.15in
	\begin{center}
		\begin{small}
			\resizebox{0.95\textwidth}{!}{\begin{tabular}{l|r|ccccc|ccccc}
					\toprule
					\multicolumn{2}{c|}{} & \multicolumn{5}{c|}{Average Combiner} & \multicolumn{5}{c}{ Max Combiner }\\
					\cmidrule(r){3-12}  
					\multicolumn{2}{c|}{} & { Natural}  & { PGD} & { CW} & {SH} & {AA} & { Natural}  & { PGD} & { CW} & {SH} & {AA} \\
					\midrule
					\midrule
					\multirow{3}{*}{\rotatebox[origin=c]{90}{CIFAR10}} 
					& ADP   &  $+{3.43}$ &  $+ 16.75$ & $+{15.42}$ &  $+{11.28}$ & $+{13.59}$  & $+{0.80}$ & $+{7.51}$ & $+{1.16}$ & $+{1.40}$ & $+{7.61}$ \\
					\cmidrule(r){2-12}
					& DVERGE  & $+{0.42}$ &  $+{2.75}$ & $+{2.53}$ & $+{0.77}$ & $+{0.71}$  & $+{0.45}$ & $+{2.21}$ & $+{2.52}$ & $-0.20$ &  $+{1.59}$\\
					\cmidrule(r){2-12}					
					& SoE  & $-1.39$ & $+5.29$ & $+5.48$ & $-2.98$ & $+5.57$ & $-1.42$ & $-1.63$ & $+0.54$ & $-0.90$ & $+6.97$\\
					\cmidrule(r){2-12}					
					&  CLDL  & $+{1.07}$ & $+{0.29}$ & $+{0.45}$ & $+{4.15}$ & $+{0.27}$ & $+{0.05}$ & $+{3.81}$ & $+{5.34}$ & $+{0.14}$ & $+{11.31}$\\
					\midrule
					\midrule
					\multirow{3}{*}{\rotatebox[origin=c]{90}{CIFAR100}} 
					& ADP  & $-0.48$ &  $+{6.01}$ & $+{0.97}$ & $+{0.66}$ & $+{2.94}$   & $+{0.16}$ & $+{0.17}$ & $+{2.92}$ & $+{0.33}$ & $+{0.60}$  \\
					\cmidrule(r){2-12}
					&  DVERGE  & $+{0.08}$ &  $+{15.77}$ & $+{12.44}$ & $+{2.50}$ & $+{7.64}$  & $+{0.56}$ & $+{1.49}$ & $+ 16.34$ & $+{2.58}$ & $+{14.89}$ \\
					\cmidrule(r){2-12}
					& SoE & $+{0.94}$ & $+{6.57}$ & $+{0.51}$ & $-2.06$ & $+{1.64}$ & $+{0.64}$ & $+{0.13}$ & $+{0.44}$ & $-1.98$ & $+{3.83}$\\	
					\cmidrule(r){2-12}					
					&  CLDL & $+2.65$ & $+1.68$ & $+1.05$ & $+0.55$ & $-7.47$ & $+2.52$ & $+9.90$ & $+7.80$ & $-7.72$ & $+15.30$\\	
					\bottomrule
				\end{tabular}
			}
		\end{small}
	\end{center}
	\vskip -0.1in
\end{table}

\subsection{Result Comparison and Analysis}

We  compare different defense approaches by reporting their classification accuracies computed  using  natural images and   adversarial examples generated by different attack algorithms, and report the results in Table \ref{tab_test_cifar}. 
The proposed enhancement has lifted the performance of  ADP and DVERGE to a state-of-the-art level for CIFAR-10 under most of the examined attacks, including both the white-box and black-box ones. 
The enhanced DVERGE by iGAT has outperformed all the compared methods in most cases for CIFAR-100. 
In addition, we report in Table \ref{tab_test_cifar1} the accuracy improvement  obtained by iGAT for the studied  ensemble defense algorithms, computed as their accuracy difference normalised by the accuracy of the original algorithm.
It can be seen that iGAT has positively improved the baseline methods in almost all cases. 
In many cases, it has achieved an accuracy boost over $10\%$.

Here are some further discussions on the performance. We observe that  DVERGE and its iGAT enhancement  perform proficiently on both CIFAR10 and CIFAR100, while ADP  and its enhancement are less robust on CIFAR100. 
We attempt to explain this by delving into the algorithm nature of DVERGE and ADP.  
The ADP design  encourages prediction disparities among the base models.  
As a result, each base model becomes proficient in classifying a subset of classes that the other base models may struggle with.
However, a side effect of this  is to discourage base models from becoming good at overlapping classes, which may become ineffective when having to handle a larger number of classes.
The reasonably good improvement achieved by iGAT for ADP in  Table \ref{tab_test_cifar1} indicates that an addition of global adversarial examples is able to rescue such situation to a certain extent.
On the other hand,   in addition to encouraging  adversarial diversity among the base models, DVERGE also aims at a stable classification so that each example is learned by multiple base models. 
This potentially makes it suitable for handling both large and small numbers of classes.
Moreover, we also observe that the average combiner provides better performance than the max combiner in general. 
The reason can be  that an aggregated prediction from multiple well-trained base classifiers is more statistically stable.

 \begin{table}[t]
	\caption{Results of ablation studies based on  $\textmd{iGAT}_{\textmd{ADP}}$ using   CIFAR-10 under the PGD attack. The results are averaged over five independent runs. The best performance is highlighted in bold.}
	\label{tab_ablation}
	\begin{center}
			 \resizebox{0.95\textwidth}{!}{
			 \begin{tabular}{l|ccccc}
					\toprule
					&  Opposite Distributing & Random Distributing & Hard Distributing& $\beta=0$ & $\textmd{iGAT}_{\textmd{ADP}}$ \\
					\midrule 
					\midrule
					Natural (\%) & $82.45$ & $83.05$ & $83.51$ & $83.45$ & $\textbf{84.96}$ \\
					%	\midrule 
					PGD (\%)   & $41.31$ & $42.60$ & $44.21$ & $42.32$ & $\textbf{46.25}$  \\
					\bottomrule
			\end{tabular}
			}
			%\end{sc}
	\end{center}
	\vskip -0.1in
\end{table}

\subsection{  Ablation Studies } \label{ablation}

The key designs of iGAT include its distributing rule and the regularization term. 
We perform ablation studies to examine their effectiveness. Firstly, we compare the used soft distributing rule with three alternative distributing rules, including   
(1)  a distributing rule opposite to the proposed, which allocates the adversarial examples to the base models that produce the lowest prediction score, 
(2) a random distributing rule by replacing Eq. (\ref{eq_biased_sampling}) by a uniform distribution, 
and (3) the hard distributing rule  in Eq. (\ref{eq_split_global_data}).
Then, we  compare with the setting of $\beta =0$ while keeping the others unchanged. 
This change removes the proposed regularization term.  
Results are reported in Table \ref{tab_ablation} using  $\textmd{iGAT}_{\textmd{ADP}}$ with the average combiner, evaluated by CIFAR-10  under the PGD attack.
It can be seen that a change or removal of a key design  results in obvious performance drop, which verifies the effectiveness of the design.

\section{Conclusion, Limitation and Future Work}

We investigate the challenging and crucial problem of defending against adversarial attacks  in the input space of a neural network, with the goal of enhancing ensemble robustness against such attacks while without sacrificing much the natural accuracy. 
We  have provided  a formal justification of the advantage of ensemble adversarial defence and proposed an effective algorithmic improvement, bridging the gap between theoretical and practical studies.
Specifically, we have proven a decrease in empirical 0-1 loss calculated on data samples challenging to classify, which is constructed to simulate the adversarial attack and defence scenario, under neural network assumptions that are feasible in practice.
Also, we   have proposed the iGAT  approach,  applicable to any ensemble adversarial defense technique for improvement. 
It is supported by (1) a probabilistic distributing rule for selectively allocating global adversarial examples to train base classifiers,  and (2) a  regularization penalty for addressing  vulnerabilities across all base classifiers.
We have conducted thorough evaluations and ablation studies using the CIFAR-10 and CIFAR-100 datasets, demonstrating effectiveness of the key designs of iGAT. Satisfactory performance improvements up to $17\%$ have been achieved by iGAT.

However, there is limitation in our work. For instance, our theoretical result is developed for only two base MLPs. 
We are   in progress of broadening the scope of Theorem \ref{main_theorem} by further relaxing the neural network assumptions, researching model architectures beyond MLPs and beyond the average/max combiners, and more importantly generalizing the theory  to more than two base classifiers. 
Additionally,  we are keen to enrich our evaluations using large-scale datasets, e.g., ImageNet.
So far, we focus on exploiting curvature information of the loss landscapes to understand adversarial robustness. In the future, it would be interesting to explore richer geometric information to improve the understanding.
Despite the research success, a potential negative societal impact of our work is that it may prompt illegal attackers to develop new attack methods once they become aware of the underlying mechanism behind the ensemble cooperation.

%\appendix

\section*{Acknowledgments}
We thank the five NeurIPS reviewers for their very insightful and useful comments that help improve the paper draft. We also thank ``The University of Manchester - China Scholarship Council Joint Scholarship'' for funding Yian's PhD research.
%% The file named.bst is a bibliography style file for BibTeX 0.99c
%\bibliographystyle{named}
\bibliographystyle{plainnat}
\bibliography{ref}
%%%%%%%%%%%%%%%%%%%%%%%%%%%%%%%%%%%%%%%%%%%%%%%%%%%%%%%%%%%%

\newpage
\section*{Appendix}

\section{Studied Ensemble Adversarial Defense Techniques } \label{existing}

We briefly explain four ensemble adversarial defense techniques including ADP~\cite{pang2019improving}, CLDL~\cite{wang2023adversarial}, DVERGE~\cite{yang2020dverge} and SoE~\cite{cui2022synergy}.
They are used to test the  proposed enhancement approach iGAT. 
In general, an ensemble  model contains multiple base models, and the training is conducted by minimizing their classification losses together with a diversity measure.  
The output of each base model contains the probabilities of an example belonging to  the $C$ classes. For an input example $\mathbf{x}\in \mathcal{X}$, we denote its output from the $i$-th base  classifier by $\mathbf{h}^i(\mathbf{x})=\left[h_1^i(\mathbf{x})...,h_C^i(\mathbf{x})\right]$ for $i\in[N]$, where $N$  denotes the  base model number. 

\subsection{ADP Defense}
ADP employs an ensemble by averaging, i.e., $\mathbf{h}(\mathbf{x}):= \frac{1}{N}\sum_{i=1}^N\mathbf{h}^i(\mathbf{x})$. 
The  base classifiers are trained  by minimizing a  loss that combines (1) the cross entropy loss of each base classifier, (2) the Shannon entropy of the ensemble prediction for regularization, and (3) a diversity measure to encourage different predictions by the base classifiers.  
Its formulation is exemplified below using one training  example $(\mathbf{x}, y_{\mathbf{x}})$:
\begin{equation}
	\label{eq_adp}
	L_{\textmd{ADP}}(\mathbf{x}, y_{\mathbf{x}})= \underbrace{\sum_{i=1}^N \ell_{CE}(\mathbf{h}^i(\mathbf{x}), y_{\mathbf{x}})}_{\textmd{classification loss}} \underbrace{-  \alpha H\left(\mathbf{h}(\mathbf{x})\right)}_{\substack{\textmd{uncertainty} \\ \textmd{regularization}}} +\underbrace{\beta\log(D(\mathbf{h}^1(\mathbf{x}), \mathbf{h}^2(\mathbf{x}), \ldots,\mathbf{h}^N(\mathbf{x}), y_{\mathbf{x}}))}_{\textmd{prediction diversity}} ,
\end{equation}
where $\alpha, \beta \geq 0$ are hyperparameters, the Shannon entropy  is  $H(\mathbf{p})=-\sum_{i=1}^C p_i \log(p_i)$,  and $D(\mathbf{h}^1, \mathbf{h}^2, \ldots \mathbf{h}^N, y)$ measures the geometric diversity between $N$ different $C$-dimensional  probability vectors.   
To compute the diversity, a normalized $(C-1)$-dimensional vector $\tilde{\mathbf{h}}^i_{\backslash y}$ is firstly obtained by removing from $\mathbf{h}^i$ the element  at the position $y\in[C]$,   the resulting   $\left\{\tilde{\mathbf{h}}^i_{\backslash y}\right\}_{i=1}^N$ are stored as columns of the  $(C-1)\times N $ matrix $\tilde{\mathbf{H}}_{\backslash y}$, and then it has   $D(\mathbf{h}^1, \mathbf{h}^2, \ldots \mathbf{h}^N, y) =\det\left(\tilde{\mathbf{H}}^T_{\backslash y}\tilde{\mathbf{H}}_{\backslash y}\right)$.

\subsection{CLDL  Defense}
CLDL   provides an alternative way to formulate the diversity between base classifiers, considering both the base classifier prediction and its loss gradient. Its loss for the training example $(\mathbf{x}, y_{\mathbf{x}})$  is given by 
\begin{align}
	\label{eq_cldl}
	\nonumber
	L_{\textmd{CLDL}}(\mathbf{x}, y_{\mathbf{x}})  =& \underbrace{\frac{1}{N}  \sum_{i=1}^N D_{\textmd{KL}} \left(\mathbf{s}^i(\mathbf{x})||\mathbf{h}^i(\mathbf{x})\right)  }_{\textmd{classification loss}} \\
	\nonumber
	&\underbrace{-\alpha \log\left(\frac{2}{N(N-1)}\sum_{ i=1}^N\sum_{j=i+1}^N e^{\textmd{JSD}\left(\mathbf{s}^i_{\backslash}(\mathbf{x})|| \mathbf{s}^j_{\backslash}(\mathbf{x}) \right) } \right)}_{\textmd{prediction diversity}}\\
	&+ \underbrace{\frac{2\beta}{N(N-1)}\sum_{i=1}^N\sum_{j=i+1}^N \cos(\nabla  D_{\textmd{KL}} (\mathbf{s}^i(\mathbf{x})|| \mathbf{h}^i(\mathbf{x})), \nabla  D_{\textmd{KL}}\left (\mathbf{s}^j(\mathbf{x} )|| \mathbf{h}^j (\mathbf{x}) \right)}_{\textmd{gradient diversity}},
\end{align}
where $\mathbf{s}^i(\mathbf{x})$ is a soft label vector computed for $(\mathbf{x}, y_{\mathbf{x}})$  
by a label  smoothing technique called label confusion model~\citep{guo2021label}.
The vector $\mathbf{s}^i_{\backslash}$ is defined as a $(C-1)$-dimensional vector by removing from $\mathbf{s}^i$ its maximal value. 
The  Kullback–Leibler (KL) divergence is used to examine the difference between the soft label vector  and the prediction vector, serving as a soft  version of  the classification loss. 
The other used divergence measure is Jensen–Shannon divergence  (JSD), given as $\textmd{JSD}\left(\mathbf{p} || \mathbf{q}\right) =\frac{1}{2}\left(\textmd{D}_{\textmd{KL}}\left(\mathbf{p} || \mathbf{g}\right) +\textmd{D}_{\textmd{KL}}\left(\mathbf{q} || \mathbf{g}\right) \right)$ with $ \mathbf{g} =\frac{1}{2}(\mathbf{p} + \mathbf{q})$.

\subsection{DVERGE Defense}
DVERGE proposes a vulnerability diversity  to help training the base classifiers with improved adversarial robustness. For training the $i$-th base classifier, it minimizes  
\begin{equation}
	\label{eq_dverge0}
	L^{\textmd{original}}_{\textmd{DVERGE}}(\mathbf{x}, y_{\mathbf{x}})  = \underbrace{ \ell_{CE}(\mathbf{h}^i(\mathbf{x}), y_{\mathbf{x}})}_{\textmd{classification loss}}  +   \underbrace{\alpha \sum_{ j\neq i} \mathbb{E}_{(\mathbf{x}_s, y_{\mathbf{x}_s})\sim D, l\in [L]} \left[\ell_{\textmd{CE}}\left(\mathbf{h}^i \left( \tilde{\mathbf{x}}\left({\mathbf{h}^j_{(l)}}, \mathbf{x}, \mathbf{x}_s\right)\right), y_{\mathbf{x}_s}\right)\right]}_{\textmd{adversarial vulnerability diversity}},
\end{equation}
where $\alpha\geq 0$ is a hyperparameter. 
Given an input example $\mathbf{x}$, $\tilde{\mathbf{x}}\left({\mathbf{h}^j_{(l)}}, \mathbf{x}, \mathbf{x}_s\right)$ computes  its distilled non-robust feature vector proposed by ~\citet{ilyas2019adversarial}. 
This   non-robust feature vector is computed  with respect to the  $l$-th layer of the $j$-th base classifier with its mapping function denoted by $\mathbf{h}^j_{(l)}$  and a randomly sampled natural example $ \mathbf{x}_s$, by
\begin{align}
	\tilde{\mathbf{x}}\left({\mathbf{h}^j_{(l)}}, \mathbf{x}, \mathbf{x}_s\right) = \arg\min_{\mathbf{z}\in \mathbb{R}^d} &\; \left\|\mathbf{h}^j_{(l)}(\mathbf{z}) - \mathbf{h}^j_{(l)}(\mathbf{x})\right\|_2^2, \\
	\nonumber
	\textmd{s.t. } & \;\left\|\mathbf{z} - \mathbf{x}_s\right\|_\infty\le \epsilon.
\end{align}
When $\mathbf{x}$ and $\mathbf{x}_s$ belong to different classes, $\tilde{\mathbf{x}}$ can be viewed as an adversarial example that is visually  similar to $\mathbf{x}_s$ but is classified by the $j$-th base classifier into the same class as $\mathbf{x}$. 
This represents a weakness of $\mathbf{h}^j$, and as a correction, the $i$-th base classifier is trained to correctly classify $\tilde{\mathbf{x}}$ into the same class as $\mathbf{x}_s$. 
But when $\mathbf{x}$ and $\mathbf{x}_s$ come from the same class, $(\tilde{\mathbf{x}}, y_{\mathbf{x}_s})$ is just an example similar to the natural one $(\mathbf{x}_s, y_{\mathbf{x}_s}) \in D$, for which the first and second loss terms play similar roles.  
Therefore, DVERGE simplifies the above loss  in practice, and trains each base classifier by
\begin{equation}
	\label{eq_dverge}
	\min_{\mathbf{h}^i} L_{\textmd{DVERGE}}(\mathbf{x}, y_{\mathbf{x}})  =  \mathbb{E}_{(\mathbf{x}_s, y_{\mathbf{x}_s})\sim D, l\in [L]} \left[ \sum_{ j\neq i} \ell_{\textmd{CE}}\left(\mathbf{h}^i \left( \tilde{\mathbf{x}}\left({\mathbf{h}^j_{(l)}}, \mathbf{x}, \mathbf{x}_s\right)\right), y_{\mathbf{x}_s}\right)\right].
\end{equation}
It removes the  classification loss on the natural data. 

\subsection{SoE  Defense}
SoE proposes a version of  classification loss using adversarial examples and  a surrogate loss that acts similarly to the vulnerability diversity loss as in DVERGE. 
For each base classifier $\mathbf{h}^i$, an auxiliary scalar output head $g^i$ is used to approximate  its predicted probability  for the true class. 
 Its overall loss exemplified by the training example $(\mathbf{x}, y_{\mathbf{x}})$ is given as
\begin{equation}
	\label{eq_soe}
	L_{\textmd{SoE}}(\mathbf{x}, y_{\mathbf{x}})  = \underbrace{\sum_{j=1}^N \ell_{BCE}\left(h_{y_\mathbf{x}}^j\left(\tilde{\mathbf{x}}^i\right), g^j\left(\tilde{\mathbf{x}}^i\right) \right)}_{\textmd{adversarial classification loss}}  -   \underbrace{\sigma\ln \sum_{j=1}^N\exp\left( \frac{-\ell_{CE}\left(\mathbf{h}^j\left(\tilde{\mathbf{x}}^i\right), y_{\mathbf{x}}\right)}{\sigma}\right) }_{\textmd{surrogate loss for vulnerability diversity}},
\end{equation}
where $\ell_{BCE}$ is the binary cross entropy loss, and $\sigma>0$ is the weight parameter.
Adversarial examples are generated to compute the losses by using the PGD attack. 
For the $j$-th base classifier, the attack is applied to each $i$-th ($i\ne j$) base classifer to generate training data, resulting in $\tilde{\mathbf{x}}^i=\phi(\mathbf{h}^i, \mathbf{x}, \textmd{PGD})$.
SoE has two training phases and in the second training phase, rather than using $\tilde{\mathbf{x}}^i$, a different adversarial example is generated by $\tilde{\mathbf{x}}=\phi(\mathbf{h}^{k}, \mathbf{x}, \textmd{PGD})$ where $k=\arg \max_{i\in[N]} g^i(\mathbf{x})$, aiming at attacking the  best-performing base classifier.

\section{Proof of Theoretical Results}

Given a $C$-class $L$-layer MLP $\mathbf{h}:\mathcal{X}\to [0,1]^C$ described in Assumption \ref{MLP_assu}, we study its cross-entropy loss for one example $(\mathbf{x}, y_{\mathbf{x}})$, i.e.,  $  \ell_{CE}(\mathbf{h}(\mathbf{x}), y_{\mathbf{x}}) = -\log h_{y_{\mathbf{x}}}(\mathbf{x})$, where  its partial derivative with respect to the $k$-th element of $\mathbf{x}$ is given by
\begin{equation}
\frac{\partial \ell_{CE}(\mathbf{x})}{\partial x_k} =\sum_{i=1}^C\left(h_i(\mathbf{x})-\Delta_{i, y_\mathbf{x}}\right)\frac{\partial z_i }{\partial x_k},
\label{eq_first_order}
\end{equation}
where $\Delta_{i, y_\mathbf{x}}= 
\left\{
\begin{array}{cc }
  1, &   \textmd{if } i=y_\mathbf{x},   \\
  0, &      \textmd{otherwise.}
\end{array}
\right.$
Perturbing the  input $\mathbf{x}$ to $\mathbf{x}+\bm\delta$, sometimes we simplify the notation of the perturbed function output, for instance, $\tilde{\ell}(\mathbf{x}) = \ell(\mathbf{x}+\bm{\delta})$,  $\tilde{\mathbf{h}}(\mathbf{x}) = \mathbf{h}(\mathbf{x}+\bm{\delta})$, $ \tilde{\mathbf{z}}(\mathbf{x}) = \mathbf{z}(\mathbf{x}+\bm{\delta})$ and $ \tilde{\sigma}(\mathbf{x}) = \sigma(\mathbf{x}+\bm{\delta})$.

Our main theorem builds on  a supporting Lemma \ref{lem_2}. 
In the lemma, we derive an upper bound for the difference between the  predictions $\mathbf{h}(\mathbf{x})$ and  $\mathbf{h}(\mathbf{z})$ for two examples, computed by  an MLP $\mathbf{h}: \mathbb{R}^d\rightarrow [0,1]^C$  satisfying Assumption \ref{MLP_assu}.
Before proceeding to prove the main theorem, we provide a proof sketch.
For each ambiguous pair, we firstly analyse its 0/1 risk under different situations when being classified by a single classifier, and derive its empirical 0/1 risk as $r_1 = 1-p+\frac{1}{2}p^2$. Then we analyse the 0/1 risk for this pair under different situations when being classified by an ensemble classifier, where both $\max$ and average combiners are considered.  We derive the  ensemble empirical 0/1 risk as $r_2= 1-3p^2+3p^3-\frac{3}{4}p^4$. Finally, we prove the main result in Eq. (\ref{lemma_result}) by obtaining a sufficient condition for achieving a reduced ensemble risk, i.e., $p> 0.425$ which enables $r_2\leq r_1$.

\subsection{Lemma \ref{lem_2} and Its Proof}

\begin{lem} 
Suppose a  $C$-class $L$-layer MLP $\mathbf{h}: \mathbb{R}^d\rightarrow [0,1]^C$ with softmax  prediction layer satisfies  Assumption \ref{MLP_assu}. For any $\mathbf{x}, \mathbf{z} \in \mathbb{R}^d$ and $c=1,2\ldots, C$ , the following holds 
\begin{equation}
\label{bound2}
	 \left|h_{c}(\mathbf{x}) - h_{c}(\mathbf{z} )\right| \le \|\mathbf{x}-\mathbf{z} \|_2 B\sqrt{C\left(\tilde{\lambda}^2 - \xi\right)}
	\end{equation} 
for some constant $\xi\le \tilde{\lambda}^2$, where   $\tilde{\lambda}$ and $B$ are constants associated with the MLP family under  Assumption \ref{MLP_assu}.
	\label{lem_2}
\end{lem}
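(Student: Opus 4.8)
The plan is to control the per-class change $\left|h_c(\mathbf{x})-h_c(\mathbf{z})\right|$ through the loss-gradient difference that the curvature assumption already bounds, avoiding any integration along the segment. Setting $\bm\delta=\mathbf{z}-\mathbf{x}$ so that $\mathbf{z}=\mathbf{x}+\bm\delta$, the finite-difference curvature of Eq.~(\ref{app_cur}) evaluated at base point $\mathbf{x}$ relates the two endpoints directly: Assumption~\ref{MLP_assu} gives $\left\|\nabla_\mathbf{x}\ell_{CE}(\mathbf{z})-\nabla_\mathbf{x}\ell_{CE}(\mathbf{x})\right\|_2\le\tilde\lambda\|\mathbf{z}-\mathbf{x}\|_2$. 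Since $\left|h_c(\mathbf{x})-h_c(\mathbf{z})\right|\le\|\mathbf{h}(\mathbf{x})-\mathbf{h}(\mathbf{z})\|_2$, it suffices to extract the softmax increment $\mathbf{h}(\mathbf{z})-\mathbf{h}(\mathbf{x})$ from this controlled gradient difference and bound its norm by $\|\mathbf{z}-\mathbf{x}\|_2\,B\sqrt{C(\tilde\lambda^2-\xi)}$.

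First I would record the algebraic identity that links the loss gradient to $\mathbf{P}_\mathbf{h}$. From Eq.~(\ref{eq_first_order}) together with the definition $p^{(k)}_i=\partial z_i/\partial x_k$, the cross-entropy gradient is $\nabla_\mathbf{x}\ell_{CE}=\mathbf{J}^T(\mathbf{h}-\mathbf{e}_{y_\mathbf{x}})$, where $\mathbf{J}=\partial\mathbf{z}/\partial\mathbf{x}$ is the $C\times d$ pre-softmax Jacobian whose $k$-th column is $\mathbf{p}^{(k)}$, and $\mathbf{e}_{y_\mathbf{x}}$ is the one-hot label vector with $i$-th entry $\Delta_{i,y_\mathbf{x}}$. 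Hence $\mathbf{J}\mathbf{J}^T=\sum_{k}\mathbf{p}^{(k)}{\mathbf{p}^{(k)}}^T=\mathbf{P}_\mathbf{h}=\mathbf{M}_\mathbf{h}\mathbf{M}_\mathbf{h}^T$, so for every vector $\mathbf{u}$ one has $\|\mathbf{J}^T\mathbf{u}\|_2=\|\mathbf{M}_\mathbf{h}^T\mathbf{u}\|_2$. This is the bridge between the quantity the curvature assumption controls and the factor matrix, and it is what will eventually introduce $B=\|\mathbf{M}_\mathbf{h}^\dagger\|_2$.

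Next I would expand the controlled gradient difference so as to isolate the softmax increment. Writing
\[
\nabla_\mathbf{x}\ell_{CE}(\mathbf{z})-\nabla_\mathbf{x}\ell_{CE}(\mathbf{x})=\mathbf{J}(\mathbf{z})^T\big(\mathbf{h}(\mathbf{z})-\mathbf{h}(\mathbf{x})\big)+\big(\mathbf{J}(\mathbf{z})-\mathbf{J}(\mathbf{x})\big)^T\big(\mathbf{h}(\mathbf{x})-\mathbf{e}_{y_\mathbf{x}}\big)
\]
separates the term driven by the softmax change (the object we want) from a residual driven by the variation of the Jacobian. The curvature assumption bounds the whole left side by $\tilde\lambda\|\mathbf{z}-\mathbf{x}\|_2$; I would then define $\xi\le\tilde\lambda^2$ to absorb, per unit $\|\mathbf{z}-\mathbf{x}\|_2^2$, the residual Jacobian-variation contribution together with the cross term, so that $\big\|\mathbf{J}(\mathbf{z})^T(\mathbf{h}(\mathbf{z})-\mathbf{h}(\mathbf{x}))\big\|_2^2\le(\tilde\lambda^2-\xi)\|\mathbf{z}-\mathbf{x}\|_2^2$. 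Applying $\|\mathbf{J}(\mathbf{z})^T\mathbf{u}\|_2=\|\mathbf{M}_\mathbf{h}^T\mathbf{u}\|_2$ and the pseudo-inverse bound $\|\mathbf{M}_\mathbf{h}^\dagger\|_2\le B$ inverts the relation to give $\|\mathbf{h}(\mathbf{z})-\mathbf{h}(\mathbf{x})\|_2\le B\sqrt{\tilde\lambda^2-\xi}\,\|\mathbf{z}-\mathbf{x}\|_2$, and passing from this aggregate $\ell_2$ bound to the per-class statement of Eq.~(\ref{bound2}) supplies the $\sqrt{C}$ factor through a Cauchy--Schwarz / norm-equivalence step across the $C$ logits.

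I expect the main obstacle to be the clean treatment of the Jacobian-variation residual. The curvature measure is an inherently second-order (finite-difference) quantity, whereas the target is a first-order Lipschitz bound on $\mathbf{h}$, and the two are reconciled only after the residual $(\mathbf{J}(\mathbf{z})-\mathbf{J}(\mathbf{x}))^T(\mathbf{h}(\mathbf{x})-\mathbf{e}_{y_\mathbf{x}})$ is subsumed into the nonnegative slack $\xi$; verifying $0\le\xi\le\tilde\lambda^2$ is precisely what makes the radical well defined. A secondary subtlety is that the pseudo-inverse step recovers $\mathbf{u}=\mathbf{h}(\mathbf{z})-\mathbf{h}(\mathbf{x})$ only up to its component in $\ker(\mathbf{P}_\mathbf{h})$; since softmax differences obey $\mathbf{1}^T\mathbf{u}=0$, I would argue that the relevant component lies in the range of $\mathbf{M}_\mathbf{h}$ so that $\|\mathbf{u}\|_2\le\|\mathbf{M}_\mathbf{h}^\dagger\|_2\,\|\mathbf{M}_\mathbf{h}^T\mathbf{u}\|_2$ is legitimate, which is exactly where the full-rank factorization posited in Assumption~\ref{MLP_assu} is used.
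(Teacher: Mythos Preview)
Your plan is essentially the paper's own argument: bound the squared loss-gradient difference by $\tilde\lambda^2\|\bm\delta\|_2^2$, split off the quadratic form $(\mathbf{h}-\tilde{\mathbf{h}})^T\mathbf{P}_\mathbf{h}(\mathbf{h}-\tilde{\mathbf{h}})$, absorb the remainder into $\xi\|\bm\delta\|_2^2$, and invert via $\|\mathbf{M}_\mathbf{h}^\dagger\|_2\le B$ to reach $\|\mathbf{h}(\mathbf{x})-\mathbf{h}(\mathbf{z})\|_2\le B\sqrt{\tilde\lambda^2-\xi}\,\|\mathbf{x}-\mathbf{z}\|_2$. Two cosmetic differences: the paper anchors the Jacobian at $\mathbf{x}$ (obtaining the residual from a Taylor expansion of $\tilde g_{(L-1),s_L}$) whereas you anchor at $\mathbf{z}$, which just shifts what lands in $\xi$; and the $\sqrt{C}$ factor in the paper does not come from a norm-equivalence step but from a deliberate chain using $\sum_c h_c=1$ (their Eq.~(\ref{eq_non_del})), which is in fact looser than your coordinate bound $|h_c-\tilde h_c|\le\|\mathbf{h}-\tilde{\mathbf{h}}\|_2$, so your route already implies Eq.~(\ref{bound2}) since $\sqrt{C}\ge 1$.
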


\begin{proof}
Define the perturbation vector $\bm\delta \in \mathbb{R}^d$ such that $\mathbf{z} = \mathbf{x} + \bm\delta$ and denote its strength by $\epsilon = \|\bm\delta\|_2$, these will be used across the proof.
We start from the cross-entropy loss curvature measured by   Eq. (\ref{app_cur}), given as
\begin{equation}
\label{eq_support1}
\lambda_\mathbf{h}^2(\mathbf{x},\bm\delta)  =  \frac{1}{\epsilon^2}\left\| \nabla  \ell_{CE}(\mathbf{h}(\mathbf{x}), y_{\mathbf{x}}) -   \ell_{CE}(\mathbf{h}(\mathbf{x} +\bm\delta), y_{\mathbf{x}})  \right\|_2^2  = \frac{1}{\epsilon^2}\sum_k \left (\frac{\partial \ell_{CE}(\mathbf{x})}{\partial x_k} - \frac{\partial \tilde{\ell}_{CE}(\mathbf{x})}{\partial x_k}\right)^2.
\end{equation}
 Below we will expand this curvature expression, where we denote a perturbed function $f(\mathbf{x})$ by using $\tilde{f}(\mathbf{x}) $ and $f(\mathbf{x}+ \bm\delta )$ interchangeably.

By Eq. (\ref{eq_first_order}), it has 
\begin{equation}
\label{eq_support2}
 \left|\frac{\partial \ell_{CE}(\mathbf{x})}{\partial x_k} - \frac{\partial \tilde{\ell}_{CE}(\mathbf{x})}{\partial x_k}\right| = \left|\sum_{c=1}^C\left(h_c(\mathbf{x})-\Delta_{i, y_\mathbf{x}}\right)\frac{\partial z_i }{\partial x_k}-  \sum_{c=1}^C\left(\tilde{h}_c(\mathbf{x})-\Delta_{i, y_\mathbf{x}}\right)\frac{\partial \tilde{z}_i }{\partial x_k} \right|. \\
\end{equation} 
Working with the MLP formulation, it is straightforward to express the quantity $\frac{\partial z_i }{\partial x_k}$ in terms of the derivatives of the activation functions and the neural network weights, as
\begin{equation}
\label{eq_support3}
\frac{\partial z_i }{\partial x_k}  =\frac{\partial \sum_{s_L} w^{(L)}_{i,s_L}a^{(L-1)}_{s_L} (\mathbf{x}) }{\partial x_k}=\sum_{s_L} w^{(L)}_{i,s_L}\frac{\partial a^{(L-1)}_{s_L} (\mathbf{x}) }{\partial x_k}.
\end{equation}
For the convenience of explanation, we simplify the notation by defining $g_{(L-1),s_L} (\mathbf{x}) = \frac{\partial a^{(L-1)}_{s_L}(\mathbf{x})}{\partial x_k}$, and we have 
\begin{align}
\label{eq_support4}
\frac{\partial z_i }{\partial x_k}  & = \sum_{s_L} w^{(L)}_{i,s_L} g_{(L-1),s_L}(\mathbf{x}) , \\
\frac{\partial \tilde{z}_i }{\partial x_k}  & =\sum_{s_L} w^{(L)}_{i,s_L} \tilde{g}_{(L-1),s_L}(\mathbf{x}).
\end{align}
Applying multivariate Taylor expansion \cite{folland2005higher}, we obtain 
\begin{equation}
\tilde{g}_{(L-1),s_L}(\mathbf{x}) = g_{(L-1),s_L} (\mathbf{x}) + \sum_{k=1}^{d}\frac{\partial g_{(L-1),s_L}(\mathbf{x})}{\partial x_k} \delta_k + \sum_{n\geq 2} \left(\sum_{\substack{a_k\in \mathcal{Z}_0, \\  k\in[d], \\\sum_{k=1}^d a_k = n}}C_n^{(a_1,...,a_d)}\delta_1^{a_1}...\delta_d^{a_d}\right),
\end{equation}
where $\mathcal{Z}_0$ denotes the set of nonnegative integers, $\delta_k$ is the $k$-th element of the perturbation vector $\bm\delta$,  and $C_{n}^{(a_1,...,a_d)}$ denotes the coefficient of each higher-order term of $\delta_1^{a_1}...\delta_d^{a_d}$.
Combining the above equations, we have
\begin{align}
\label{eq_support5}
&\sum_k \left(\frac{\partial \ell_{CE}(\mathbf{x})}{\partial x_k} - \frac{\partial \tilde{\ell}_{CE}(\mathbf{x})}{\partial x_k}\right)^2 \\
\nonumber
= & \sum_k \left(\sum_{i=1}^C\sum_{s_L} \left(h_i(\mathbf{x})g_{(L-1),s_L}(\mathbf{x}) - \tilde{h}_i(\mathbf{x})\tilde{g}_{(L-1),s_L}(\mathbf{x})\right)w^{(L)}_{i,s_L}   - \right.\\
\nonumber
& \left. \sum_{s_L}\left( g_{(L-1),s_L} (\mathbf{x}) -  \tilde{g}_{(L-1),s_L}(\mathbf{x}) \right)w^{(L)}_{y_{\mathbf{x}},s_L}   \right)^2 \\
\nonumber
=& \underbrace{\sum_{k }\left( \sum_{i=1}^C\sum_{s_L} \left(h_i(\mathbf{x})- \tilde{h}_i(\mathbf{x})\right) g_{(L-1),s_L}(\mathbf{x})  w^{(L)}_{i,s_L} \right)^2}_{T(\mathbf{x})} + \underbrace{\sum_{n\geq 1}\left(\sum_{\substack{a_k\in \mathcal{Z}_0, \\  k\in[d], \\\sum_{k=1}^d a_k = n}}D_n^{(a_1,...,a_d)}\delta_1^{a_1}...\delta_d^{a_d}\right)}_{S(\mathbf{x})},
\end{align} 
where  $D_n^{(a_1,...,a_d)}$ denotes the coefficient of $\delta_1^{a_1}...\delta_d^{a_d}$, computed from the terms like $h_i(\mathbf{x})$, $\tilde{h}_i(\mathbf{x})$, $C_{n}^{(a_1,...,a_d)}$ and the neural network weights.
Define  a $C$-dimensional column vector $\mathbf{p}^{(k)}$ with its $i$-th element computed by  $p^{(k)}
_i = \sum_{s_L} g_{(L-1),s_L}(\mathbf{x}) w^{(L)}_{i,s_L} $ and a matrix $\mathbf{P}_\mathbf{h} = \sum_{k}\mathbf{p}^{(k)}{\mathbf{p}^{(k)}}^T$, the term $T(\mathbf{x})  $ can be rewritten as
\begin{align}
	\label{eq_tx_definition}
	T(\mathbf{x})  &= \sum_{k}\left(\left(\mathbf{h}(\mathbf{x})- \tilde{\mathbf{h}}(\mathbf{x})\right)^T\mathbf{p}_k\right)^2 =\left(\mathbf{h}(\mathbf{x})- \tilde{\mathbf{h}}(\mathbf{x})\right)^T \mathbf{P}_{\mathbf{h}} \left(\mathbf{h}(\mathbf{x})- \tilde{\mathbf{h}}(\mathbf{x})\right).
\end{align}
The   factorization $\mathbf{P}_{\mathbf{h}} = \mathbf{M}_\mathbf{h}\mathbf{M}_\mathbf{h}^T$ can be obtained by conducting singular value decomposition of $\mathbf{P}_{\mathbf{h}} $. 
The above new expression of $T(\mathbf{x}) $ helps  bound   the difference between $\mathbf{h}(\mathbf{x})$ and  $\tilde{\mathbf{h}}(\mathbf{x})$.

According to the norm definition, we have 
\begin{align}
	\|\mathbf{M}_\mathbf{h}\|_2 & =\max_{\mathbf{q}\in \mathbb{R}^{d}\ne \mathbf{0}} \frac{\|\mathbf{M}_\mathbf{h}\mathbf{q}\|_2}{\|\mathbf{q}\|_2}   =\max_{\mathbf{q}\in\mathbb{R}^{C}\ne \mathbf{0}} \frac{\|\mathbf{q}^T\mathbf{M}_\mathbf{h}\|_2}{\|\mathbf{q}\|_2}, \\
	\|\mathbf{M}_\mathbf{h}^\dagger\|_2 & =\max_{\mathbf{q}\in \mathbb{R}^{C}\ne \mathbf{0}} \frac{\|\mathbf{M}_\mathbf{h}^\dagger\mathbf{q}\|_2}{\|\mathbf{q}\|_2}   =\max_{\mathbf{q}\in\mathbb{R}^{d}\ne \mathbf{0}} \frac{\|\mathbf{q}^T\mathbf{M}_\mathbf{h}^\dagger\|_2}{\|\mathbf{q}\|_2}.
\end{align}

Subsequently, the following holds for any nonzero $\mathbf{q}\in \mathbb{R}^{C}$ and  $\mathbf{p}\in \mathbb{R}^{d}$
\begin{align}
&\|\mathbf{q}^T\mathbf{M}_\mathbf{h}\|_2\leq \|\mathbf{M}_\mathbf{h}\|_2\|\mathbf{q}\|_2, \\
&\|\mathbf{p}^T\mathbf{M}_\mathbf{h}^\dagger\|_2\leq \|\mathbf{M}_\mathbf{h}^\dagger\|_2\|\mathbf{p}\|_2.
\end{align}
Letting $\mathbf{q}=\mathbf{h}(\mathbf{x})- \tilde{\mathbf{h}}(\mathbf{x})$ and using the fact that each element in $\mathbf{h}(\mathbf{x})$ and $\tilde{\mathbf{h}}(\mathbf{x})$ is a probability value less than 1, it has 
\begin{align}
	T(\mathbf{x}) = &\; \left\|\left(\mathbf{h}(\mathbf{x})- \tilde{\mathbf{h}}(\mathbf{x})\right)^T\mathbf{M}_\mathbf{h}\right\|_2^2 \le \; \left\|\mathbf{M}_\mathbf{h}\right\|_2^2\left\|\mathbf{h}(\mathbf{x}) - \tilde{\mathbf{h}}(\mathbf{x})\right\|_2^2\le \left( \sup_{\mathbf{h}}  \left\|\mathbf{M}_\mathbf{h}\right\|_2 \right)^2 C,
\end{align}
which results in the fact that $T(\mathbf{x})$ is upper bounded by Assumption \ref{MLP_assu} where  $\left\|\mathbf{M}_{\mathbf{h}}\right\|_2 \leq B_0$. 
Letting $\mathbf{p}=\mathbf{M}_\mathbf{h}^T\left(\mathbf{h}(\mathbf{x})- \tilde{\mathbf{h}}(\mathbf{x})\right)$ and using the Assumption \ref{MLP_assu} where   $\left\|\mathbf{M}_\mathbf{h}^\dagger\right\|_2 \leq B $,   it has 
\begin{align}
\label{eq_non_cur1}
\nonumber
\|\mathbf{h}(\mathbf{x})- \tilde{\mathbf{h}}(\mathbf{x})\|_2& =   \left\|\left(\mathbf{h}(\mathbf{x})- \tilde{\mathbf{h}}(\mathbf{x})\right)^T\mathbf{M}_\mathbf{h}\mathbf{M}_\mathbf{h}^\dagger\right\|_2 \\
&  \le  \left\|\mathbf{M}_\mathbf{h}^\dagger\right\|_2\left\|\left(\mathbf{h}(\mathbf{x})- \tilde{\mathbf{h}}(\mathbf{x})\right)^T\mathbf{M}_\mathbf{h}\right\|_2  \leq B\sqrt{T(\mathbf{x})}.
\end{align}

Now we focus on analyzing $T(\mathbf{x})$. 
Working with Eq. (\ref{eq_support5}) and considering the fact that $\sum_k \left(\frac{\partial \ell_{CE}(\mathbf{x})}{\partial x_k} - \frac{\partial \tilde{\ell}_{CE}(\mathbf{x})}{\partial x_k}\right)^2$ is a positive term and $T(\mathbf{x})$ is upper bounded, $S(\mathbf{x})$  has to be lower bounded.  We express this lower bound by  $\xi\epsilon^2$ using a constant $\xi$ for the convenience of later derivation, resulting in 
\begin{equation}
\label{sx_bound}
S(\mathbf{x}) \geq \xi\epsilon^2.
\end{equation}
Given the perturbation strength $\epsilon^2 = \|\bm\delta\|_2^2$, applying the curvature assumption in Assumption \ref{MLP_assu}, i.e., $\lambda_{\mathbf{h}}(\mathbf{x}, \bm\delta)\leq \tilde{\lambda}$, also Eqs. (\ref{eq_support1}), (\ref{eq_support5}) and (\ref{sx_bound}), it has
\begin{equation}
	\label{eq_tx_ineq}
T(\mathbf{x})  + \xi\epsilon^2 \leq \tilde{\lambda}^2\epsilon^2 \Rightarrow  T(\mathbf{x})  \leq      (\tilde{\lambda}^2 -\xi)\epsilon^2. 
\end{equation}
Incorporating  this into Eq. (\ref{eq_non_cur1}), it has 
\begin{equation}
\label{eq_non_cur}
\|\mathbf{h}(\mathbf{x})- \tilde{\mathbf{h}}(\mathbf{x})\|_2  \le \epsilon B\sqrt{\tilde{\lambda}^2 - \xi}.
\end{equation}

Applying the inequality of $\sum_{i=1}^m a_i^2 \geq \frac{1}{m} \left(\sum_{i=1}^m a_i\right)^2 $, also the fact   $\sum_{c=1}^C h_c(\mathbf{x}) =\sum_{c=1}^C \tilde{h}_c(\mathbf{x}) =1$,  the following holds for any class $c \in \{1,2,\ldots, C\}$:
\begin{align}
	\nonumber
	\|\mathbf{h}(\mathbf{x})- \tilde{\mathbf{h}}(\mathbf{x})\|_2^2  \ge &  \sum_{j\ne c} \left|h_j(\mathbf{x})-\tilde{h}_j(\mathbf{x})\right|^2 \ge\frac{1}{C-1}\left(\sum_{j\ne c} \left|h_j(\mathbf{x})-\tilde{h}_j(\mathbf{x}) \right|\right)^2 \\
	\ge &  \frac{1}{C} \left|\sum_{j\ne   c} \left(h_j(\mathbf{x})-\tilde{h}_j(\mathbf{x}) \right)\right| ^2   = \frac{1}{C} \left| h_{c}(\mathbf{x})-\tilde{h}_{c}(\mathbf{x}) \right|^2 .
	\label{eq_non_del}
\end{align} 
Incorporating Eq. (\ref{eq_non_cur}) to the above, we have
\begin{equation}
	\begin{split}
	\label{lemma_result}
		\left|h_{c}(\mathbf{x}) - \tilde{h}_{c}(\mathbf{x})\right| \le \sqrt{C}\|\mathbf{h}(\mathbf{x})- \tilde{\mathbf{h}}(\mathbf{x})\|_2  \leq  \epsilon B\sqrt{C\left(\tilde{\lambda}^2 - \xi\right)}.
	\end{split}
\end{equation}
Inserting back $\mathbf{z} = \mathbf{x} + \bm\delta$ and $\epsilon = \|\bm\delta\|_2$ into Eq. (\ref{lemma_result}), we have 
\begin{equation}
	\left|h_{c}(\mathbf{x}) - h_c(\mathbf{z})\right|  \leq \|\mathbf{x}-\mathbf{z} \|_2 B\sqrt{C\left(\tilde{\lambda}^2 - \xi\right)}.
\end{equation}
This completes the proof.

\end{proof}

{
	\theoremstyle{plain}
	\newtheorem*{assumption1f}{Assumption 4.4}
}

\subsection{Proof of Theorem \ref{main_theorem}}

\textbf{Single Classifier.} We analyse the expected 0/1 risk of  a single acceptable classifier $\mathbf{h}\in \mathcal{H}$ for a small dataset $D_2 = \{({\mathbf{x}}_i,  y_i ), ({\mathbf{x}}_j,  y_j)\}$ containing the two examples from the ambiguous pair $a= ((\mathbf{x}_i,  y_i), ({\mathbf{x}}_j, y_j))$. The risk is expressed by
\begin{equation}
	\mathbb{E}_{\mathbf{h}\in\mathcal{H}}[\hat{\mathcal{R}}_{0/1}(D_2,\mathbf{h})] = \mathbb{E}_{\mathbf{h}\in\mathcal{H}}\left[ \frac{1}{2}\left({1}\left[h_{y_i}({\mathbf{x}}_i)<\max_{c\ne y_i}h_{c}({\mathbf{x}}_i)\right]+{1}\left[h_{y_j}({\mathbf{x}}_j)<\max_{d\ne y_j}h_{c}({\mathbf{x}}_j)\right]\right)\right]. 
	\label{eq_thm_h_risk}
\end{equation}
We consider three cases.

\noindent
\textbf{Case I}: Suppose the example $({\mathbf{x}}_i,  y_i )$ is correctly classified, thus, according to Assumption \ref{assu_simple_classification} for acceptable classifiers, it has $h_{y_i}({\mathbf{x}}_i)\ge 0.5+\frac{1}{J}$. As a result, its prediction score for a wrong class  ($c\neq y_i$) satisfies 
\begin{equation}
	\label{eq_thm_h_r1}
	h_{c}({\mathbf{x}}_i)\le 1 - h_{y_i}({\mathbf{x}}_i)\le1-(0.5+\frac{1}{J}) =0.5-\frac{1}{J}<0.5	< h_{y_i}({\mathbf{x}}_i).
\end{equation}
Applying Lemma \ref{lem_2} for $c=y_i$ and Eq. (\ref{eq_ambig_cond}) in Definition \ref{ass_borderline_data} for ambiguous pair, it has  
\begin{equation}
	\label{eq_loss_cur}
	h_{y_i}({\mathbf{x}}_i) - h_{y_i}({\mathbf{x}}_j)\leq |h_{y_i}({\mathbf{x}}_i) - h_{y_i}({\mathbf{x}}_j)| \leq  \|\mathbf{x}_i-\mathbf{x}_j \|_2 B\sqrt{C\left(\tilde{\lambda}^2 - \xi\right)} \leq \frac{1}{J}.
\end{equation}
Combining the above with the Case I assumption of $h_{y_i}({\mathbf{x}}_i)\ge 0.5+\frac{1}{J}$, it has
\begin{equation}
	\label{eq_drop_half}
	h_{y_i}({\mathbf{x}}_j) \ge h_{y_i}({\mathbf{x}}_i) - \frac{1}{J}\ge(0.5 + \frac{1}{J}) - \frac{1}{J} = 0.5, 
\end{equation}
and hence, for any $c\ne y_i$,  it has
\begin{equation}
	\label{eq_thm_h_r2} 
	h_{c}({\mathbf{x}}_j) < 1 - 	h_{y_i}({\mathbf{x}}_j) \le 0.5\le h_{y_i}({\mathbf{x}}_j),
\end{equation}
which indicates that the example $({\mathbf{x}}_j, y_j)$ is wrongly predicted to class $y_i$ in Case I. Therefore, 
\begin{equation}
	\label{eq_thm_single_risk_case1}
	\hat{\mathcal{R}}^{(\textmd{I})}_{0/1}(D_2,\mathbf{h})=\frac{0+1}{2}=\frac{1}{2}.
\end{equation}

\noindent
\textbf{Case II}:  Suppose the example $({\mathbf{x}}_j,  y_j )$ is correctly classified.  Following exactly the same derivation as in Case I, this results in the wrong classification of the other example $({\mathbf{x}}_i,  y_i)$ into class $y_j$. Therefore,  
\begin{equation}
	\label{eq_thm_single_risk_case2}
	\hat{\mathcal{R}}^{(\textmd{II})}_{0/1}(D_2,\mathbf{h})=\frac{1+0}{2}=\frac{1}{2}.
\end{equation}

\noindent
\textbf{Case III}: Suppose both examples are misclassified, which simply results in 
\begin{equation}
	\label{eq_thm_single_risk_case3}
	\hat{\mathcal{R}}^{(\textmd{III})}_{0/1}(D_2,\mathbf{h})=\frac{1+1}{2}=1.
\end{equation}

\noindent
Note that these three cases are mutually exclusive. Use $E_1$, $E_2$ and $E_3$ to represent the three events corresponding to Case I, Case II and Case III, respectively.
Letting $p$ denote the probability of correctly classifying  an example by an acceptable classifier, it is straightforward to obtain $p(E_3) = (1-p)^2$, while $p(E_1) =p(E_2) = \frac{1}{2}\left(1-(1-p)^2\right) = p-\frac{1}{2}p^2$. 
Therefore, it has 
\begin{align}
	\label{eq_prob_single}
	& \mathbb{E}_{\mathbf{h}\in\mathcal{H}}\left[\hat{\mathcal{R}}_{0/1}(D_2,\mathbf{h})\right] \\
	\nonumber
	=\;& \hat{\mathcal{R}}^{(\textmd{I})}_{0/1}(D_2,\mathbf{h}) p(E_1) +\hat{\mathcal{R}}^{(\textmd{II})}_{0/1}(D_2,\mathbf{h})p(E_2) + \hat{\mathcal{R}}^{(\textmd{III})}_{0/1}(D_2,\mathbf{h})p(E_3),\\
	\nonumber
	=\; & \frac{1}{2} p(E_1) + \frac{1}{2} p(E_2) +p(E_3) =  p-\frac{1}{2}p^2 + (1-p)^2 =1-p+\frac{1}{2}p^2.
\end{align}

\textbf{Ensemble Classifier.} We next analyse using  $D_2$ the expected 0/1 risk  of an ensemble of two acceptable base classifiers ($\mathbf{h}^0, \mathbf{h}^1\in \mathcal{H}$) with a $\emph{max}$ or average combiner, in five cases.

\noindent
\textbf{Case I}: Suppose the example $({\mathbf{x}}_i,  y_i )$ is correctly classified by both  base classifiers. According to Assumption \ref{assu_simple_classification} for acceptable classifiers, it has $h^0_{y_i}({\mathbf{x}}_i)\ge 0.5+\frac{1}{J}$ and $h^1_{y_i}({\mathbf{x}}_i)\ge0.5+\frac{1}{J}$. Following exactly the same derivation as in the earlier Case I analysis for a single classifier, i.e., Eqs. (\ref{eq_thm_h_r1}) and (\ref{eq_thm_h_r2}), the following holds for any $c\neq y_i$, as
\begin{align}
	\label{eq_case1_1}
	&h^0_{c}({\mathbf{x}}_i) < h^0_{y_i}({\mathbf{x}}_i),\  h^0_{c}({\mathbf{x}}_j) < h^0_{y_i}({\mathbf{x}}_j), \\
	\label{eq_case1_2}
	&h^1_{c}({\mathbf{x}}_i) < h^1_{y_i}({\mathbf{x}}_i),\  h^1_{c}({\mathbf{x}}_j) < h^1_{y_i}({\mathbf{x}}_j).
\end{align}
As a result, for any $c\neq y_i$, the ensemble prediction  satisfies the following
\begin{align}
	&h_{e, y_i}^{(0,1)}({\mathbf{x}}_i) = \max\left(h^0_{y_i}({\mathbf{x}}_i), h^1_{y_i}({\mathbf{x}}_i) \right) >  \max(h^0_{c}({\mathbf{x}}_i),h^1_{c}({\mathbf{x}}_i)) = h_{e, c}^{(0,1)}({\mathbf{x}}_i), \\
	&h_{e, y_i}^{(0,1)}({\mathbf{x}}_i) = \frac{1}{2}\left(h^0_{y_i}({\mathbf{x}}_i) + h^1_{y_i}({\mathbf{x}}_i) \right) >  \frac{1}{2}(h^0_{c}({\mathbf{x}}_i) + h^1_{c}({\mathbf{x}}_i)) = h_{e, c}^{(0,1)}({\mathbf{x}}_i),
\end{align}
each corresponding to the $\max$ and average combiners, respectively.
This indicates a correct ensemble classification of $({\mathbf{x}}_i, y_i)$. Also, it satisfies
\begin{align}
	&h_{e, y_j}^{(0,1)}({\mathbf{x}}_j) =  \max\left(h^0_{y_j}({\mathbf{x}}_j), h^1_{y_j}({\mathbf{x}}_j) \right) <  \max(h^0_{y_i}({\mathbf{x}}_j),h^1_{y_i}({\mathbf{x}}_j)) = h_{e, y_i}^{(0,1)}({\mathbf{x}}_j),  \\
	&h_{e, y_j}^{(0,1)}({\mathbf{x}}_j) = \frac{1}{2}\left(h^0_{y_j}({\mathbf{x}}_j) + h^1_{y_j}({\mathbf{x}}_j) \right) <  \frac{1}{2}(h^0_{y_i}({\mathbf{x}}_j) + h^1_{y_i}({\mathbf{x}}_j)) = h_{e, y_i}^{(0,1)}({\mathbf{x}}_j),  
\end{align}
when using the $\max$ and average combiners, respectively. This indicates a wrong classification of $({\mathbf{x}}_j, y_j)$.
Finally, for Case I, we have
\begin{equation}
	\label{eq_thm_case1}
	\hat{\mathcal{R}}^{(\textmd{I})}_{0/1}\left(D_2, \mathbf{h}_{e}^{(0,1)}\right)=\frac{1}{2}\left(0 + 1\right)=\frac{1}{2},
\end{equation}

\noindent
\textbf{Case II}: Suppose the example $({\mathbf{x}}_j,  y_j )$ is correctly classified by both   base classifiers.  By following exactly the same derivation as in Case I as above,  the ensemble correctly classifies $({\mathbf{x}}_j,  y_j )$, while wrongly classifies $({\mathbf{x}}_i,  y_i)$. As a result, it has
\begin{equation}
	\label{eq_thm_case2}
	\hat{\mathcal{R}}^{(\textmd{II})}_{0/1}\left(D_2, \mathbf{h}_{e}^{(0,1)}\right)=\frac{1}{2}\left(1 + 0\right)=\frac{1}{2}.
\end{equation}

\noindent
\textbf{Case III}: Suppose the example $({\mathbf{x}}_i,  y_i )$ is correctly classified  by $\mathbf{h}^0$, while the other example $({\mathbf{x}}_j,  y_j)$ is correctly classified  by $\mathbf{h}^1$, i.e.,  $h^0_{y_i}({\mathbf{x}}_i)\ge0.5+\frac{1}{J}$ and $h^1_{y_j}({\mathbf{x}}_j)\ge0.5+\frac{1}{J}$ according to Assumption \ref{assu_simple_classification}. 
Following a similar analysis as in Case I for a single classifier, we know that $\mathbf{h}^0$ consequently misclassifies $({\mathbf{x}}_j,  y_j)$ into $y_i$, while $\mathbf{h}^1$ misclassifies $({\mathbf{x}}_i,  y_i )$ into $y_j$. 
Also, by Assumption \ref{assu_simple_classification}, it is assumed that the misclassification happens with a less score than $0.5+\frac{1}{J}$, thus, $h^0_{y_i}(\mathbf{x}_j)\le 0.5+\frac{1}{J}$ and $h^1_{y_j}(\mathbf{x}_i)\le 0.5+\frac{1}{J}$.  
Combining all these,  for any $c\ne y_i$ and $d\ne y_j$, we have
\begin{align} 
	\label{eq_thm_case3_1}
	&h^1_d(\mathbf{x}_i)<0.5\le h^1_{y_j}(\mathbf{x}_i)\le 0.5+\frac{1}{J}\le h^0_{y_i}(\mathbf{x}_i),\\
	\label{eq_thm_case3_2}
	&h^0_c(\mathbf{x}_j)<0.5\le h^0_{y_i}(\mathbf{x}_j)\le 0.5+\frac{1}{J}\le h^1_{y_j}(\mathbf{x}_j),
\end{align}
and according to the second condition in Assumption \ref{assu_simple_classification}, it has 
\begin{align}
	\label{eq_assum_ineq_i_1}	
	h^0_{c}({\mathbf{x}}_i)\le &\frac{1-h^0_{y_i}({\mathbf{x}}_i)}{C-1}\le h^0_{y_i}({\mathbf{x}}_i), \\
	\label{eq_assum_ineq_j_2}	
	h^1_{d}({\mathbf{x}}_j)\le& \frac{1-h^1_{y_j}({\mathbf{x}}_j)}{C-1}\le h^1_{y_j}({\mathbf{x}}_j).
\end{align} 
Subsequently, the ensemble prediction by a $\max$ combiner satisfies
\begin{align}
	h_{e, y_i}^{(0,1)}({\mathbf{x}}_i) =& \max\left(h^0_{y_i}({\mathbf{x}}_i), h^1_{y_i}({\mathbf{x}}_i) \right) = h^0_{y_i}({\mathbf{x}}_i) >  \max(h^0_{c}({\mathbf{x}}_i),h^1_{c}({\mathbf{x}}_i)) = h_{e, c}^{(0,1)}({\mathbf{x}}_i), \\
	h_{e, y_j}^{(0,1)}({\mathbf{x}}_j) =&  \max\left(h^0_{y_j}({\mathbf{x}}_j), h^1_{y_j}({\mathbf{x}}_j) \right) = h^1_{y_j}({\mathbf{x}}_j) >  \max(h^0_{d}({\mathbf{x}}_j),h^1_{d}({\mathbf{x}}_j))   = h_{e, d}^{(0,1)}({\mathbf{x}}_j),
\end{align}
which indicates a correct classification of both examples.

Now we consider the slightly more complex situation of ensemble by averaging.  According to the previous analysis, we know that $\mathbf{x}_i$ is classified by $\mathbf{h}^1$ to $y_j$, and $\mathbf{x}_i$ is classified by $\mathbf{h}^0$ to $y_i$. Applying the second condition in Assumption \ref{assu_simple_classification},  we analyse the quantity  $1-h^1_{y_j}({\mathbf{x}}_i)-h^1_{y_i}({\mathbf{x}}_i)$ as 
\begin{equation} 
	\label{eq_assum_ineq_i_2_tmp}
	\small
	1-h^1_{y_j}({\mathbf{x}}_i)-h^1_{y_i}({\mathbf{x}}_i)=\sum_{c\ne y_i, y_j} h^1_{c}({\mathbf{x}}_i)\le (C-2)\frac{1-h^1_{y_j}({\mathbf{x}}_i)}{C-1}=1-h^1_{y_j}({\mathbf{x}}_i) -\left(\frac{1-h^1_{y_j}({\mathbf{x}}_i)}{C-1}\right),
\end{equation}
resulting in 
\begin{equation} 
	\label{eq_assum_ineq_i_2}
	h^1_{y_i}({\mathbf{x}}_i)\ge\frac{1-h^1_{y_j}({\mathbf{x}}_i)}{C-1}.
\end{equation}
Combining Eq. (\ref{eq_thm_case3_1}), Eq. (\ref{eq_assum_ineq_i_1}) and Eq. (\ref{eq_assum_ineq_i_2}), it has
\begin{equation}
	\label{eq_sum_ineq_i_1}
	h^1_{y_i}({\mathbf{x}}_i)\ge  \frac{1-h^1_{y_j}({\mathbf{x}}_i)}{C-1}> \frac{1-h^0_{y_i}({\mathbf{x}}_i)}{C-1}\ge h^0_{c}({\mathbf{x}}_i).
\end{equation}
On the other hand, from Eq. (\ref{eq_thm_case3_1}), one can obtain 
%a matching result to the above by analysing the quantity $1-h^0_{y_j}({\mathbf{x}}_i)- h^0_{y_i}({\mathbf{x}}_i)$, which is
\begin{equation} 
	\label{eq_thm_si3_max}
	h^0_{y_i}({\mathbf{x}}_i)\ge   h^1_{c}({\mathbf{x}}_i).
\end{equation}
As a result, the ensemble prediction by an average combiner satisfies 
\begin{equation}
	\label{case3_res1}
	h_{e, y_i}^{(0,1)}({\mathbf{x}}_i) = \frac{1}{2}\left(h^0_{y_i}({\mathbf{x}}_i) + h^1_{y_i}({\mathbf{x}}_i) \right) >  \frac{1}{2}(h^0_{c}\left({\mathbf{x}}_i) + h^1_{c}({\mathbf{x}}_i) \right) = h_{e, c}^{(0,1)}({\mathbf{x}}_i),
\end{equation}
for any $c\neq y_i$.
Following the same way of deriving Eqs. (\ref{eq_sum_ineq_i_1}) and (\ref{eq_thm_si3_max}), but for $\mathbf{x}_j$, we can obtain another two inequalities  $h^1_{y_j}({\mathbf{x}}_j) \geq h^0_{d}({\mathbf{x}}_j)$ and  $h^0_{y_j}({\mathbf{x}}_j) \ge   h^1_{d}({\mathbf{x}}_j)$, for any $d\neq y_j$, and subsequently, \begin{equation}
	\label{case3_res2}
	h_{e, y_j}^{(0,1)}({\mathbf{x}}_j) =  \frac{1}{2}\left(h^0_{y_j}({\mathbf{x}}_j) + h^1_{y_j}({\mathbf{x}}_j) \right) >  \frac{1}{2}\left(h^0_{d}({\mathbf{x}}_j) + h^1_{d}({\mathbf{x}}_j)\right) = h_{e, d}^{(0,1)}({\mathbf{x}}_j).
\end{equation}
Putting together Eqs. (\ref{case3_res1}) and (\ref{case3_res2}),  a correct ensemble classification is achieved for both examples. Finally, we conclude the following result 
\begin{equation}
	\label{eq_thm_case3}
	\hat{\mathcal{R}}^{(\textmd{III})}_{0/1}\left(D_2, \mathbf{h}_{e}^{(0,1)}\right) =0,
\end{equation}
which is applicable to both the $\max$ and average combiners.

\noindent
\textbf{Case IV}: Suppose the example $({\mathbf{x}}_i,  y_i )$ is correctly classified  by $\mathbf{h}^1$ while the other example $({\mathbf{x}}_j,  y_j)$ is correctly classified  by $\mathbf{h}^0$. This is essentially the same situation as in Case III, and the same result $\hat{\mathcal{R}}^{(\textmd{IV})}_{0/1}\left(D_2, \mathbf{h}_{e}^{(0,1)}\right) =0 $ is obtained. 

\noindent
\textbf{Case V}: This case represents all the remaining situations, where, for instance,  the example $({\mathbf{x}}_i,  y_i )$ and/or $({\mathbf{x}}_i,  y_i )$ is misclassified by both base classifiers. Here, we do not have sufficient information to analyse the error in detail, and also it is not necessary to do so for our purpose. So we just simply leave it as $\hat{\mathcal{R}}^{(\textmd{V})}_{0/1}\left(D_2, \mathbf{h}_{e}^{(0,1)}\right) \leq 1$.

\noindent
These five cases are mutually exclusive, and we use $\{H_i\}_{i=1}^5$ to denote them accordingly.    The first four cases represent the same situation that each example is correctly classified by a single base classifier, therefore $p(H_1) = p(H_2) = p(H_3) = p(H_4) = p(E_1)p(E_2) = \left(p-\frac{1}{2}p^2\right)^2$, while $p(H_5) =1-\sum_{i=1}^4p(H_i) = 1-4\left(p-\frac{1}{2}p^2\right)^2= 1-(2p-p^2)^2$. Incorporating the result of $\hat{\mathcal{R}}_{0/1}\left(D_2, \mathbf{h}_{e}^{(0,1)}\right)$ regarding to the five cases, we have
\begin{align}
	\label{eq_prob_ensemble}
	&\mathbb{E}_{ \mathbf{h}^0,\mathbf{h}^1\in\mathcal{H} }\left[\hat{\mathcal{R}}_{0/1}\left(D_2, \mathbf{h}_{e}^{(0,1)}\right)\right] \\
	\nonumber
	\leq \; & \frac{1}{2} p(H_1) + \frac{1}{2} p(H_2) +0(p(H_3)+p(H_4)) +p(H_5)   \\
	\nonumber
	= \; & \left(p-\frac{1}{2}p^2\right)^2 +1-(2p-p^2)^2 =1-3p^2+3p^3-\frac{3}{4}p^4.
\end{align}

\textbf{Risk Comparison.} We examine the sufficient condition for achieving a reduced ensemble loss for this dataset $D_2$, i.e.,
\begin{equation}
	\label{D2_result}
	\mathbb{E}_{\mathbf{h}^0,\mathbf{h}^1\in\mathcal{H}}\left[\hat{\mathcal{R}}_{0/1}\left(D_2,  \mathbf{h}^{(0,1)}_e \right)\right] <  \mathbb{E}_{\mathbf{h}\in\mathcal{H}}\left[\hat{\mathcal{R}}_{0/1}\left(D_2,\mathbf{h}\right)\right].
\end{equation}
Incorporating Eqs. (\ref{eq_prob_single}) and  (\ref{eq_prob_ensemble}), this requires to solve the following polynomial inequality, as 
\begin{equation}
	1-3p^2+3p^3-\frac{3}{4}p^4< 1-p+\frac{1}{2}p^2,
\end{equation}
for which $p>0.425$ provides a solution.  Applying  the expectation  $\mathbb{E}_{a\sim A(D)}$ over the data samples, where the ambiguous pair $a$ is equivalent to $D_2$, Eq. (\ref{eq_thm_1}) from the theorem is obtained. This completes the proof.

\section{A Toy Example for Theorem \ref{main_theorem}} \label{toy}

\begin{figure}[h]
	\centering
	\includegraphics[width=0.45\textwidth]{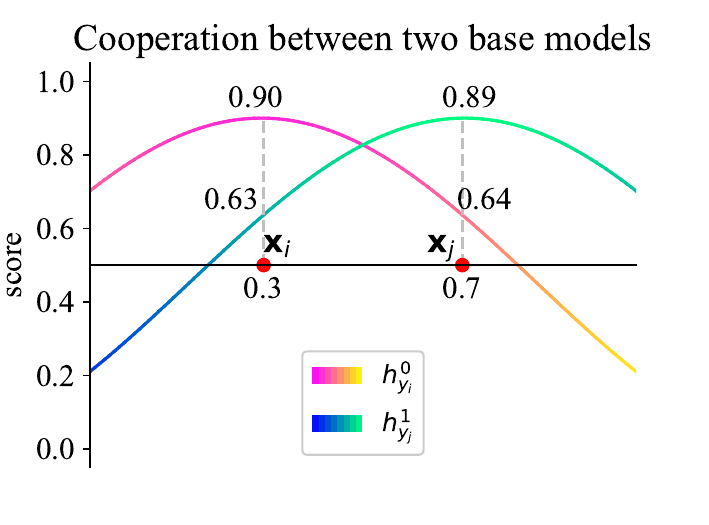}
	\caption{\label{fig_exmaple_thm1} Illustration for Theorem \ref{main_theorem}.}
\end{figure}

Guided by Theorem \ref{main_theorem}, we aim to verify the difference between the single-branch and  ensemble mechanisms using 1-dimensional 2-class data. Suppose $D=\{(\mathbf{x}_i=0.3, y_i=0), (\mathbf{x}_j=0.7, y_j=1)\}$, we use it to construct an ambiguous pair $a=((\mathbf{x}_i, y_i), (\mathbf{x}_j, y_j))$ as presented in Fig. \ref{fig_exmaple_thm1}. We select two base models $\mathbf{h}^0, \mathbf{h}^1\in\mathcal{H}$ such that $\mathbf{h}^0$ classifies 
$\mathbf{x}_i$ well and $\mathbf{h}^1$ classifies $\mathbf{x}_j$ well. W.l.o.g, let $\mathbf{h}=\mathbf{h}^0$.
For the classifiers $\mathbf{h}$, $\mathbf{h}^0$ and $\mathbf{h}^1$, we analyze the 0-1 loss defined in Eq. (\ref{eq_01_loss}). Then, we have

\begin{align}
	\nonumber
	\hat{\mathcal{R}}_{0/1}(a,\max(\mathbf{h}^0,\mathbf{h}^1))=\; & \frac{1}{2}\big({1} \left[\max(0.9, 0.37)<\max(0.1, 0.63)\right]\\
	\nonumber
	&+{1} \left[\max(0.89, 0.36)<\max(0.11, 0.64)\right]\big) =0,\\
	\nonumber
	\hat{\mathcal{R}}_{0/1}(a,\left(\mathbf{h}^0+\mathbf{h}^1\right)/2)=\; & \frac{1}{2}\big({1} \left[(0.9+ 0.37)/2<(0.1, 0.63)/2\right]\\
	\nonumber
	&+{1} \left[(0.89+ 0.36)/2<(0.11+ 0.64)/2\right]\big) =0,\\
	\nonumber
	\hat{\mathcal{R}}_{0/1}(a,\mathbf{h})=\;  & \frac{1}{2}\big({1}  \left[0.9<0.63\right]+{1} \left[0.36<0.64\right]\big) =0.5.
\end{align}
Hence, it has  $\hat{\mathcal{R}}_{0/1}(a,\max(\mathbf{h}^0,\mathbf{h}^1))<\hat{\mathcal{R}}_{0/1}(a,\mathbf{h})$ and $\hat{\mathcal{R}}_{0/1}(a,\left(\mathbf{h}^0+\mathbf{h}^1\right)/2)<\hat{\mathcal{R}}_{0/1}(a,\mathbf{h})$, which matches the resulting inequality in Theorem \ref{main_theorem}.

\begin{table}[t]
	\caption{Results on two additional black-box attacks.}
	\label{tabtwo_black_box}
	\begin{center}
			\begin{tabular}{l|cc}
				\toprule
				& Simple Attack (\%) & Bandits Attack (\%) \\
				\midrule 
				\midrule
				ADP & $75.91$ & $59.21$\\
				$\textmd{iGAT}_{\textmd{ADP}}$ & $\textbf{79.43}$ & $\textbf{64.55}$\\
				\midrule
				DVERGE & $79.43$ & $63.80$\\
				$\textmd{iGAT}_{\textmd{DVERGE}}$ & $\textbf{79.61}$ & $\textbf{64.89}$\\
				\midrule
				CLDL & $76.82$ & $63.80$\\
				$\textmd{iGAT}_{\textmd{CLDL}}$ & $\textbf{78.84}$ &  $\textbf{65.25}$\\
				\midrule
				SoE & $\textbf{76.22}$& $66.10$\\
				$\textmd{iGAT}_{\textmd{SoE}}$  & $75.18$& $\textbf{66.50}$\\
				\bottomrule
			\end{tabular}
		%\end{sc}
	\end{center}
	\vskip -0.1in
\end{table}

\begin{table}[t]
	\caption{Comparison of the ensemble robustness (\%) to adversarial attacks of various perturbation strengths, using the AutoAttack on CIFAR-10. The results are averaged over five independent runs.}
	\label{tab_test_various_strengths}
	\vskip 0.15in
	\begin{center}
		%\begin{small}
			%\resizebox{0.6\textwidth}{!}{
			\begin{tabular}{l|r|ccccc}
					\toprule
					\multicolumn{2}{c|}{\backslashbox[10mm]{}{$\epsilon$}} & { $2/255$}  & { $4/255$} & { $6/255$} & {$8/255$} & {$10/255$} \\
					\midrule
					\midrule
					\multirow{3}{*}{\rotatebox[origin=c]{90}{CIFAR10}} 
					& CLDL & $71.16$ & $60.36$ & $48.89$ & $37.06$ & $26.00$ \\	
					&   $\textmd{iGAT}_{\textmd{CLDL}}$  & $\textbf{72.69}$ &  $\mathbf{61.19}$ & $\textbf{49.07}$ & $\textbf{37.12}$ & $25.96$   \\
					\cmidrule(r){2-7}
					&  DVERGE  & $76.01$ &  $64.80$ & $51.92$ & $39.22$ & $27.72$   \\
					&   $\textmd{iGAT}_{\textmd{DVERGE}}$  & $.\textbf{76.19}$ &  $\textbf{65.14}$ & $\textbf{52.52}$ & $\textbf{39.48}$ & $\textbf{28.59}$   \\
					\cmidrule(r){2-7}
					& ADP & $71.93$ &  $59.53$ & $47.27$ & $35.52$ & $25.01$  \\
					&   $\textmd{iGAT}_{\textmd{ADP}}$  & $\textbf{76.02}$ &  $\textbf{64.76}$ & $\textbf{52.44}$ & $\textbf{40.38}$ & $\textbf{29.46}$   \\
					\midrule
					\midrule
					\multirow{3}{*}{\rotatebox[origin=c]{90}{CIFAR100}}
					&  SoE  & $46.55$ & $33.89$ & $23.77$ & $15.92$ & $10.49$\\
					&  $\textmd{iGAT}_{\textmd{SoE}}$  & $45.72$ & $33.18$ & $23.28$ & $\textbf{16.09}$ & $\textbf{10.52}$\\
					\cmidrule(r){2-7}
					& DVERGE  & $48.87$ &  $35.81$ & $25.35$ & $17.26$ & $11.18$  \\
					& $\textmd{iGAT}_{\textmd{DVERGE}}$  & $\textbf{49.43}$ &  $\textbf{37.11}$ & $\textbf{26.78}$ & $\textbf{18.60}$ & $\textbf{12.13}$  \\
					\cmidrule(r){2-7}					
					& ADP   &  $45.67$ &  $33.90$ & $24.42$ &  $17.36$ & $12.27$   \\
					& $\textmd{iGAT}_{\textmd{ADP}}$    &  $\textbf{46.33}$ &  $\textbf{34.33}$ & $\textbf{24.85}$ &  $\textbf{17.86}$ & $\textbf{12.53}$    \\
					\bottomrule
				\end{tabular}
			%}
		%\end{small}
	\end{center}
	\vskip -0.1in
\end{table}

\begin{table}[t]
	\caption{Comparison between  $\textmd{iGAT}_{\textmd{ADP}}$ (average combiner) and a baseline single classifier,  evaluated using  CIFAR-10  data and the PGD attack ($\epsilon=8/255$). The results are averaged over five independent runs.}
	\label{tab_ensem2single}
	\begin{center}
		%\begin{sc}
		%\resizebox{0.55\textwidth}{!}{
			\begin{tabular}{l|c|c|c}
				\toprule
				%\diaghead{\theadfont Diag AAAAAA} %
				%{\small Attack}{\small Defense}
				& Natural (\%)  & PGD (\%) & Model size \\
				\midrule 
				Single Classifier & $81.23$ & $38.33$ & 43M\\
				$\textmd{iGAT}_{\textmd{ADP}}$   & $\textbf{84.95}$ & $\textbf{46.25}$  & 9M \\
				\bottomrule
			\end{tabular}
		%}
		%\end{sc}
	\end{center}
	\vskip -0.1in
\end{table}

\begin{table}[!h]
	\caption{Probabilities of base models classifying correctly adversarial examples from the CIFAR-10.}
	\label{tab_probability}
	\begin{center}
		%\resizebox{0.75\textwidth}{!}{
			\begin{tabular}{l|cccccc}
				\toprule
				& ADP & $\textmd{iGAT}_{\textmd{ADP}}$ & DVERGE & $\textmd{iGAT}_{\textmd{DVERGE}}$ & CLDL & $\textmd{iGAT}_{\textmd{CLDL}}$ \\ 
				\midrule
				$p$ & $41.92\%$ & $45.98\%$ & $46.25\%$ & $47.82\%$ & $50.37\%$ & $51.02\%$ \\
				\bottomrule
			\end{tabular}
		%}
		%\end{sc}
	\end{center}
	\vskip -0.1in
\end{table}

\begin{table}[!h]
	\caption{Distributions of predicted scores by base models correctly classifying adversarial examples from the CIFAR-10.}
	\label{tab_probability_distribution}
	\begin{center}
		%\resizebox{0.75\textwidth}{!}{
			\begin{tabular}{l|cccccc}
				\toprule
				Interval & $<$0.5 & 0.5-0.6 & 0.6-0.7 & 0.7-0.8 & 0.8-0.9  & 0.9-1.0\\ 
				\midrule
				$\textmd{iGAT}_{\textmd{ADP}}$ & $43.55\%$ & $13.30\%$ & $11.15\%$ & $10.20\%$ & $10.62\%$ & $11.19\%$ \\
				$\textmd{iGAT}_{\textmd{DVERGE}}$ & $20.07\%$ & $13.15\%$ & $12.20\%$ & $12.46\%$ & $14.44\%$ & $27.69\%$ \\
				$\textmd{iGAT}_{\textmd{CLDL}}$ & $49.50\%$ & $14.12\%$ & $12.26\%$ & $13.53\%$ & $9.77\%$ & $0.81\%$ \\
				\bottomrule
			\end{tabular}
		%}
		%\end{sc}
	\end{center}
	\vskip -0.1in
\end{table}

\begin{table}[!h]
	\caption{Expectations of the maximum predicted scores on incorrect classes among base models when tested on adversarial examples from the CIFAR-10.}
	\label{tab_most_wrong_score}
	\begin{center}
		%\resizebox{0.7\textwidth}{!}{
			\begin{tabular}{cccccc}
				\toprule
			 	ADP & $\textmd{iGAT}_{\textmd{ADP}}$ & DVERGE & $\textmd{iGAT}_{\textmd{DVERGE}}$ & CLDL & $\textmd{iGAT}_{\textmd{CLDL}}$ \\ 
				\midrule
				$0.390$ & $0.323$ & $0.476$ & $0.396$ & $0.320$ & $0.281$ \\
				\bottomrule
			\end{tabular}
		%}
		%\end{sc}
	\end{center}
	\vskip -0.1in
\end{table}

\begin{table}[t]
	\caption{Probabilities of $h^i_{y_\mathbf{x}}({\mathbf{x}})\ge \frac{1-h^i_{\hat{y}}({\mathbf{x}})}{C-1}$ for $y_{\mathbf{x}}\ne \hat{y}$ when tested on adversarial examples from the CIFAR-10.}
	\label{tab_above_line_score}
	\begin{center}
		%\resizebox{0.7\textwidth}{!}{
			\begin{tabular}{cccccc}
				\toprule
				ADP & $\textmd{iGAT}_{\textmd{ADP}}$ & DVERGE & $\textmd{iGAT}_{\textmd{DVERGE}}$ & CLDL & $\textmd{iGAT}_{\textmd{CLDL}}$ \\ 
				\midrule
				$68.74\%$ & $73.12\%$ & $78.99\%$ & $80.19\%$ & $78.39\%$ & $80.87\%$ \\
				\bottomrule
			\end{tabular}
		%}
		%\end{sc}
	\end{center}
	\vskip -0.1in
\end{table}

\begin{table}[!h]
	\caption{Predicted scores on incorrectly classified adversarial examples by the best-performing base model using the CIFAR-10.}
	\label{tab_increased_score}
	\begin{center}
		%\resizebox{0.7\textwidth}{!}{
			\begin{tabular}{cccccc}
				\toprule
				ADP & $\textmd{iGAT}_{\textmd{ADP}}$ & DVERGE & $\textmd{iGAT}_{\textmd{DVERGE}}$ & CLDL & $\textmd{iGAT}_{\textmd{CLDL}}$ \\ 
				\midrule
				$0.264$ & $0.291$ & $0.231$ & $0.240$ & $0.235$ & $0.241$ \\
				\bottomrule
			\end{tabular}
		%}
		%\end{sc}
	\end{center}
	\vskip -0.1in
\end{table}

\section{Additional Experiments and Results}

\textbf{Extra Black-box Attacks:}
We conduct more experiments to test the effectiveness of iGAT, by evaluating against another two  time-efficient and commonly used black-box attacks, using the CIFAR-10 dataset.  Results are reported in Table \ref{tabtwo_black_box}. It can be seen that, in most cases, a  robustness improvement has been achieved by the enhanced  defence.

\textbf{Varying Perturbation Strengths:}
In addition to the perturbation strength $\epsilon=8/255$ tested in the main experiment, we compare the defense techniques under AutoAttack with different settings of perturbation strength.  
Table \ref{tab_test_various_strengths} reports the resulting classification accuraccies,
demonstrating that the proposed  iGAT is able to improve the adversarial robustness of  the studied  defense techniques in most cases.

\textbf{Comparison Against Single Classifiers:}
To observe how an ensemble classifier performs with specialized ensemble adversarial training, we compare $\textmd{iGAT}_{\textmd{ADP}}$ based on the average combiner against a single-branch classifier. 
This classifier uses the ResNet-18 architecture, and is trained using only the standard adversarial training without any diversity or regularization driven treatment. 
Table \ref{tab_ensem2single} reports the results. 
It can be  seen that the specialized  ensemble adversarial training technique can significantly improve both the natural accuracy and adversarial robustness.

\textbf{Experiments Driven by Assumption \ref{assu_simple_classification}:}
To approximate empirically the probability $p$ that a trained base classifier can correctly classify a challenging example, we generate a set of globally adversarial examples $\tilde{\mathbf{X}}$ by attacking the ensemble $\mathbf{h}$ (average combiner) using the PGD and then estimate $p$ on this dataset by $p=\mathbb{E}_{{i\in[N]},{(\mathbf{x},y_\mathbf{x})\sim \left(\tilde{\mathbf{X}}, \mathbf{y}\right)}}1[h_{y_{\mathbf{x}}}^i(\mathbf{x}) >\max_{c\ne y_{\mathbf{x}}}h_c^i(\mathbf{x})]$. From Table \ref{tab_probability}, we can see that all the enhanced ensembles contain base models with a higher probability for correct classifications.

We then examine the distributions of predicted scores by base models when classifying correctly the globally adversarial data generated in the same as in Table \ref{tab_probability}. It can be seen that the case exists, where a base model correctly classifies a challenging example with a sufficiently large predicted score.

Next, we compute the quantity, i.e., the largest incorrectly predicted score  $\mathbb{E}_{i\in[N],\mathbf{(\mathbf{x},y_\mathbf{x})\sim(\tilde{\mathbf{X}}, \mathbf{y})}} \max_{c\ne y_{\mathbf{x}}}h_c^i(\mathbf{x})$, to indirectly estimate whether the small-incorrect-prediction condition, i.e., $f_{c}(\mathbf{x})\leq \frac{1- f_{\hat{y}}(\mathbf{x})}{C-1}$ in Assumption \ref{assu_simple_classification}, can be satisfied better after enhancement. Note that $y_i\ne \hat{y}_i$ indicates the incorrect classification while $y_i =\hat{y}_i$ indicates the opposite, both of which are uniformly measured by the defined quantity. This quantity, which is expected to be small, can also be used to evaluate the effect of the proposed regularization term in Eq. (\ref{eq_regularization}) on the training. Table \ref{tab_most_wrong_score}  shows that the largest wrongly predicted scores by the base models have significantly dropped for all the enhanced ensemble models.

Note that  small values of $h^i_{c\ne y_\mathbf{x}}(\mathbf{x})$ is equivalent to the high values of $h^i_{y_\mathbf{x}}(\mathbf{x})$, and in the theorem,  when $\hat{y} \ne y_{\mathbf{x}}$, $h^i_{y_\mathbf{x}}({\mathbf{x}})\ge\frac{1-h^i_{\hat{y}}({\mathbf{x}})}{C-1}$ is the actual condition expected to be satisfied. Therefore, to examine the second item (the case of misclassification) in Assumption \ref{assu_simple_classification}, we measure the probability $\mathbb{E}_{{i\in[N]},{(\mathbf{x},y_\mathbf{x})\sim(\tilde{\mathbf{X}}, \mathbf{y})}}1\left[h^i_{y_\mathbf{x}}({\mathbf{x}})\ge\frac{1-h^i_{\hat{y}}({\mathbf{x}})}{C-1}\right]$ instead. Table \ref{tab_above_line_score} shows that after enhancement, the probability of satisfying the condition increases.

As shown in Figure \ref{fig_exmaple_thm1}, as long as the peaks of two curves are above the line $x=0.5$ and at similar heights (in which case, are 0.89 and 0.90), whether their height are changed slightly to a higher or lower position will not increase the 0-1 loss. Elevating the low predicted scores to the same level as the high scores serves the crucial factor in fulfilling the cooperative function. Hence, we choose to examine the effect of our distributing rule by checking whether the predicted scores by the best-performing base models on incorrectly classified examples have been increased after enhancement, using the quantity $\mathbb{E}_{(\mathbf{x}, y_\mathbf{x})\sim(\tilde{\mathbf{X}}, \mathbf{y}), \hat{y}_\mathbf{h}(\mathbf{x})\ne y_\mathbf{x}}[\max_{i\in[N]} h_{y_\mathbf{x}}^i(\mathbf{x})]$. It can be seen from Table \ref{tab_increased_score} that  base models were kept improved  on the examples they are already good at classifying.

\textbf{Time Efficiency of iGAT:} (1) On distributing rule: We expect the distributing rule to reduce the  training data size to $\frac{1}{N}$ for training each  base classifier, where  $N$ is the number of base classifiers, and therefore to improve the training time.   We  add  an experiment by comparing the training time on $N=1000$ training samples required by a full version of $\textmd{iGAT}_{\textmd{ADP}}$ and that by a modified version with this distributing rule removed. CIFAR-10 data is used for Evaluation. 
The observed time for $\textmd{iGAT}_{\textmd{ADP}}$ without the distributing design is  $5.63$ seconds, while with the distributing design is  $5.42$ seconds,  indicating a slightly reduced  training time. 
(2) On overall training: We illustrate  the training epochs between  the  ADP defense and its enhancement $\textmd{iGAT}_{\textmd{ADP}}$. ADP necessitates $691$ epochs for ADP,  whereas $\textmd{iGAT}_{\textmd{ADP}}$ only requires   $163$ epochs. Based on these, we can conclude that  $\textmd{iGAT}_{\textmd{ADP}}$ trains faster than ADP.

\textbf{Observation of Curvature:}
We investigated empirically the value of the network curvature $\tilde{\lambda}$ using neural networks trained by the ADP defense techniques, and recorded a $\tilde{\lambda}$ value around $0.06$. The smaller value of $\tilde{\lambda}$  indicates  a looser upper bound in Eq. (\ref{eq_ambig_cond}).  According to our Definition \ref{ass_borderline_data}, a looser upper bound allows to define an ambiguous pair containing two intra-class examples that are less close to each other, thus less challenging to classify.

\end{document}